\newcommand*\footnotescript{%
  \@setfontsize\footnotescript{8.3}{9.5}%
}
\newtheorem{theorem}{Theorem}
\definecolor{ultralightgray}{gray}{0.97}
\newtcolorbox{promptbox}[1][]{
  breakable,           
  enhanced,
  colback=ultralightgray, 
  colframe=ultralightgray,   
  boxrule=0pt,           
  arc=0pt,                 
  left=3pt,right=3pt,      
  top=3pt,bottom=3pt,      
  fontupper=\tt\normalsize,
  #1                        
}
\colorlet{lighttan}{Tan!8!white}
\newtcolorbox{constitutionbox}[1][]{
  breakable,           
  enhanced,
  colback=lighttan, 
  colframe=ultralightgray,   
  boxrule=0pt,           
  arc=0pt,                 
  left=3pt,right=3pt,      
  top=3pt,bottom=3pt,      
  fontupper=\tt\normalsize,
  #1                        
}
\newtcolorbox{examplebox}[1][]{
  breakable,           
  enhanced,
  colback=ultralightgray, 
  colframe=ForestGreen,   
  boxrule=1pt,           
  arc=0pt,                 
  left=3pt,right=3pt,      
  top=3pt,bottom=3pt,      
  fontupper=\normalfont\normalsize,
  #1                        
}
\title{Latent Principle Discovery for \\ Language Model Self-Improvement}
\author{%
  Keshav Ramji\thanks{Correspondence to \texttt{keshav.ramji@ibm.com}.}\hspace{1mm}, Tahira Naseem, Ramón Fernandez Astudillo \\
  IBM Research AI\\
}
\begin{document}
\doparttoc
\faketableofcontents

\maketitle

\begin{abstract}
When language model (LM) users aim to improve the quality of its generations, it is crucial to specify concrete behavioral attributes that the model should strive to reflect. However, curating such principles across many domains, even non-exhaustively, requires a labor-intensive annotation process. To automate this process, we propose eliciting these latent attributes that guide model reasoning toward human-preferred responses by explicitly modeling them in a self-correction setting. Our approach mines new principles from the LM itself and compresses the discovered elements to an interpretable set via clustering. Specifically, we employ a form of posterior-regularized Monte Carlo Expectation-Maximization to both identify a condensed set of the most effective latent principles and teach the LM to strategically invoke them in order to intrinsically refine its responses. We demonstrate that bootstrapping our algorithm over multiple iterations enables smaller language models (7-8B parameters) to self-improve, achieving +8-10\% in AlpacaEval win-rate, an average of +0.3 on MT-Bench, and +19-23\% in principle-following win-rate on IFEval. We also show that clustering the principles yields interpretable and diverse model-generated constitutions while retaining model performance. The gains that our method achieves highlight the potential of automated, principle-driven post-training recipes toward continual self-improvement.
\end{abstract}

\section{Introduction}\label{sec:intro}

Modern language models \citep{grattafiori2024llama3herdmodels,openai2024gpt4ocard, deepseekai2025deepseekv3technicalreport} have achieved striking fluency and coherence in open‐ended generation, yet guiding them to satisfy multiple, possibly overlapping human‐defined criteria remains a core challenge. Conventional approaches to align language models (LMs) rely on human annotations distinguishing between a chosen and rejected generation, even when their gap in quality may be nuanced and multi-faceted. 
Constitutional AI and other related paradigms \citep{bai2022constitutionalaiharmlessnessai,guan2025deliberativealignmentreasoningenables} consider a human-curated "constitution" of high-level attributes which models' responses should follow. While these frameworks enable models to be steered toward safer behavior, the static nature of their constitutions requires experts to anticipate nuances in advance and update rules manually as edge cases surface. As use cases proliferate, new failure modes arise -- reliably synthesizing task-specific "amendments" and collecting annotations is a costly and time-consuming process -- leading to brittleness and limited adaptability. 
We aim to automate the process of discovering these attributes for model improvement, obviating the need for human intervention or explicit domain adaptation.

Automatically discovering behaviors for self-improvement can be seen as a meta-level reasoning process. Recent efforts to induce reasoning capabilities in LMs have often focused on domains such as math and code where a gold reference answer exists and candidate answers are more easily verifiable \citep{deepseekai2025deepseekr1incentivizingreasoningcapability}. The availability of verifiable responses has also been capitalized for teaching self-correction \citep{kumar2025training}. However, in this work, we focus on open-ended text generation tasks that are challenging to verify; identifying situations for a human to intervene and induce a refined response
can be especially tricky in such cases. 

\begin{figure}[t]
  \centering
\label{fig:main-figure}
\begin{minipage}[t]{0.45\textwidth}
\centering\includegraphics[width=\textwidth]{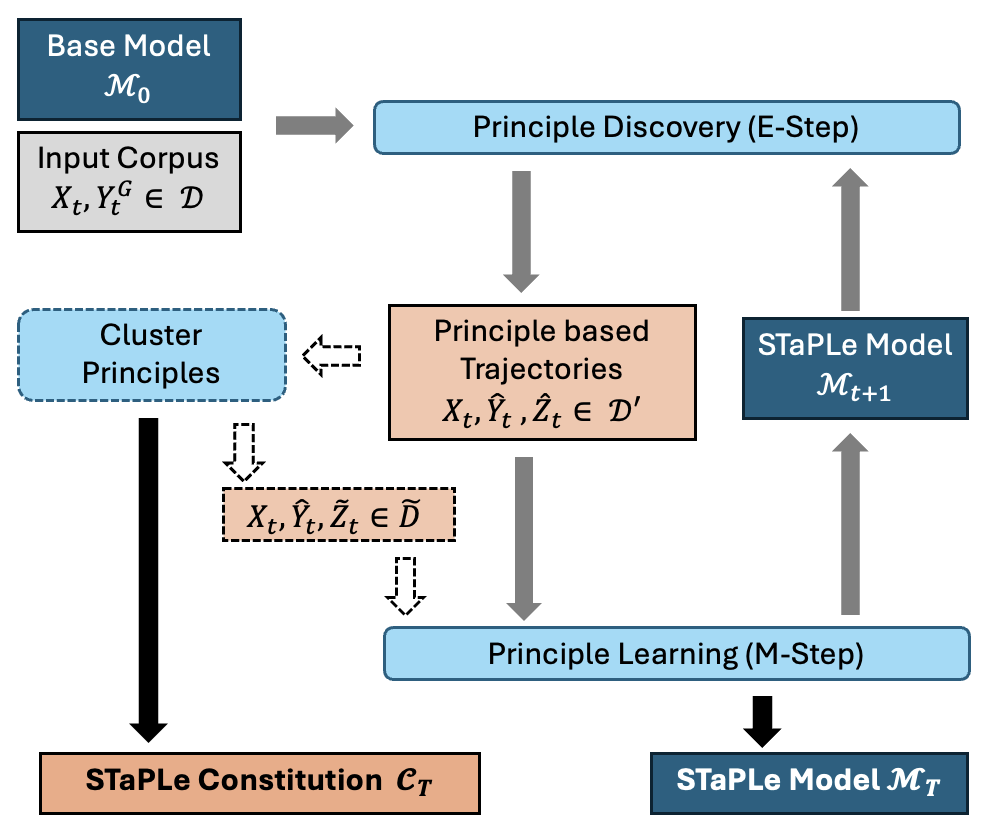}
  \end{minipage}\hfill
  \begin{minipage}[t]{0.425\textwidth}
\includegraphics[width=\textwidth]{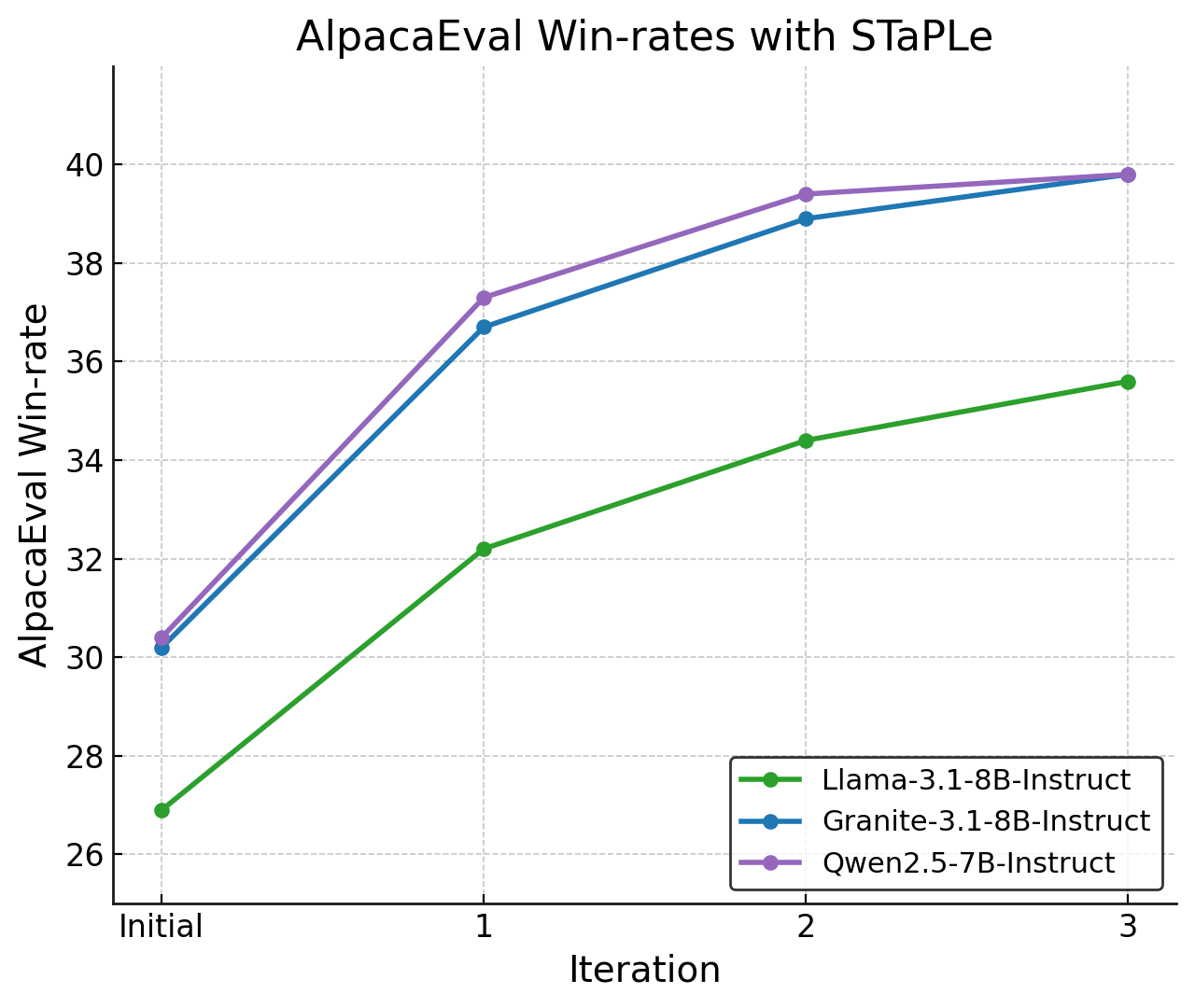}
  \end{minipage}\hfill

  \caption{%
    We introduce \textbf{\textit{Self-Taught Principle Learning} (STaPLe)}. (Left) Our Monte Carlo EM algorithm alternates between on-policy discovery and learning of latent principles guiding self-correction behavior. The principles may also be clustered to a compressed set, yielding human-interpretable constitutions $\mathcal{C}_t$ and models trained to follow them $\mathcal{M}_t$. (Right) The STaPLe algorithm induces self-improvement in AlpacaEval win-rate over three iterations for all three language models. 
  }

\end{figure}

We introduce a novel approach to discover expressive principles, treating them as latent attributes in the self-correction setting to bridge an initial attempt and a target response. We find that the language model itself serves as an effective principle generator to improve its responses, contrasting prior works which rely on human-defined principles or strong model supervision. 
We design an Expectation-Maximization  algorithm, \textbf{\underline{S}elf-\underline{Ta}ught \underline{P}rinciple \underline{Le}arning (STaPLe)}, which first leverages rejection sampling to identify principle candidates for self-correction and choose the induced refinement that is closest to the gold, then trains over these trajectories to learn this principle-guided refinement behavior. Iteratively applying this method  results in a model trained on a dynamic constitution of elements produced from itself, implicitly learning the refinement goal to enable its self-correction abilities at inference-time. We also show that the discovered principles can be compressed to a smaller set for human readability by applying hierarchical clustering through posterior regularization, without compromising downstream performance. 

We validate the efficacy of our method over several iterations on instruction-following benchmarks including MT-Bench \citep{mt-bench} and AlpacaEval \citep{alpaca_eval}, and leverage Prometheus-v2.0 \citep{prometheus} to analyze win-rates with fine-grained, principle-following rubrics. Our results show that STaPLe outpaces baseline methods such as Self-Taught Reasoner (STaR; \cite{zelikman}) (adapted for non-verifiable responses) and prompted refinement approaches like Self-Refine \citep{madaan}. It continues to self-improve in performance over multiple iterations, before saturating. We also find that clustering largely matches or outperforms training on all principles. 

Our key contributions can be summarized as follows:

\begin{itemize}
    \item We propose a Monte Carlo Expectation-Maximization (EM) algorithm for iterative latent principle discovery and learning, to enable language model self-improvement. 
    \item We find that on-policy generated principles are effective stimuli for self-correction in smaller LMs, and training to learn them improves the performance on MT-Bench, AlpacaEval-2.0.

    \item 
    Clustering the set of discovered principles prior to training retains the performance gains of the full distribution while yielding an interpretable constitution.
\end{itemize}

\section{Related Work}

\paragraph{Principle-Driven Language Modeling}

Early work in principle-driven alignment demonstrated that embedding high-level rule sets or “constitutions” into the training loop can steer model behavior without direct human labels for each generation. Constitutional AI \citep{bai2022constitutionalaiharmlessnessai} introduced a two-stage process in which a pre-trained model first generates critiques of its own outputs against a static constitution curated and synthesized a priori and learns from these critiques, then trains a preference model from on-policy data and performs RL. Dromedary \citep{dromedary} extended this idea by introducing Self-Align, an algorithm which generates prompts, applies a small set of human written principles with in-context learning, and then fine-tunes the model to learn the principle-guided responses; this was later extended by SALMON \citep{sun2024salmon} to design an instructable, principle-following reward model (RM). ConstitutionalExperts \citep{petridis-etal-2024-constitutionalexperts} and SPRI \citep{zhan2025sprialigninglargelanguage} address the problem of mapping prompts to principles. Deliberative Alignment \citep{guan2025deliberativealignmentreasoningenables} introduces CoT reasoning over safety specifications, and trains models to learn these reasoning traces via SFT and online RL with a safety RM. 

More recent efforts have sought to leverage models to draft and refine constitutions with limited human supervision. LEAP \citep{zhang2024incontextprinciplelearningmistakes} showed that models can propose new principles via self-critique given the gold response, synthesizing a list of task-specific principles that can be used for in-context learning. SAMI \citep{franken} introduced a self-alignment mechanism where a strong model is used to generate a small set of principle candidates, and the target model is trained to maximize the mutual information between the constitution and the model’s responses through an InfoNCE loss. IterAlign \citep{chen-etal-2024-iteralign} and ICAI \citep{findeis2025inverse} also leverage strong frontier models such as GPT-4/GPT-4o \citep{openai2024gpt4ocard} for constitution proposal. The former uses a red-teaming framework to identify model weaknesses, and uses the strong model to propose a principle towards helpfulness, harmlessness, and honesty; the latter considers principles as specifications over preference annotations, injecting them to reconstruct the annotation. Most recently, DeepSeek introduced a pointwise generative reward model \citep{liu2025inferencetimescalinggeneralistreward} over self-generated principles, demonstrating that such an RM can be successfully used to improve inference-time scaling. 

\paragraph{Self-Correction and Self-Improvement}
The recent emergence of large reasoning models such as o3 and DeepSeek-R1 \citep{openai2025o3_o4mini_systemcard,deepseekai2025deepseekr1incentivizingreasoningcapability} has lead to a growing exploration into the ability of models to perform intrinsic refinement of their own outputs with internal feedback. However, much of the prior literature below either trains models over the improved responses alone or performs prompted self-refinement at inference-time, rather than \textit{learning} this ability. STaR first leveraged model-generated critiques followed by revision, showing that alternating these two stages yields gains in instruction following \citep{zelikman}. Self-Refine \citep{madaan} explored the setting of prompted multi-turn refinement: after producing an initial answer, the model is induced to critique itself, reflecting on the weaknesses of the current response on specific dimensions, proposing actionable changes, and reflecting them in the corrected response. ProMiSe \citep{ramji2024selfrefinementlanguagemodelsexternal} extended this ability to smaller language models with weaker self-critique and refinement capabilities, showing that greedily refining responses relative to one attribute in a sequential manner improves performance, while demonstrating that training on synthetic dialogues modeling this self-refinement process improves dialogue question answering. APO \citep{doosterlinck2024anchoredpreferenceoptimizationcontrastive} addressed the notion of minimally contrastive preference pairs, reinforcing the notion that revision along fewer attributes yields a better signal for preference optimization. 
Recently, reinforcement learning strategies have been explored to train models to directly perform self-correction. RISE \citep{RISE} frames self-correction through a multi-turn Markov decision process (MDP), performs best-of-N sampling over sequential attempt rollouts, and uses offline RL to train the model to correct over these trajectories. SCoRe \citep{kumar2025training} improves over this by a multi-turn RL formulation that boosts the quality of the initial attempt and leverages reward shaping to incentivize self-correction to improve the refined response. Beyond individual output corrections, recent work has shown that models can bootstrap their underlying capabilities in an iterative fashion over time. Several works suggest that sampling diverse responses or environmental interactions, filtering based on feedback or majority voting, and training on these on-policy generations can boost performance \citep{huang-etal-2023-large,patel2024largelanguagemodelsselfimprove}. SPIN \citep{SPIN} introduced a self-play fine-tuning approach, wherein the model compares its generations against the ground-truth annotated responses in the SFT dataset yielding a preference pair, fine-tunes with a contrastive objective, and repeats this process iteratively. \citet{huang2025selfimprovement} theoretically formalizes the self-improvement phenomenon through a "\textit{sharpening}" process, wherein the model's policy moves towards maximizing its generations' self-reward.

\paragraph{Latent Chain-of-Thought Learning}

Chain-of-thought (CoT) prompting 
\citep{wei2023chainofthoughtpromptingelicitsreasoning, kojima} elicits an explicit, verbalized step-by-step walkthrough of the reasoning trace guiding the model from the input to the final response. Simultaneously, STaR \citep{zelikman} leveraged a gold response-conditioned rationalization of the CoT and fine-tuned the model to learn this reasoning behavior. The notion of rationalization as a latent variable modeling problem was previously explored in ELV \citep{ELV}, under the framework of labeling examples with explanations through a variational EM algorithm. More recent approaches also treat chain-of-thought as a latent variable that may be trained over rather than purely to be induced at inference-time. TRICE \citep{TRICE} casts intermediate reasoning steps as latent codes, training the model to marginalize over them so that it internally develops coherent reasoning trajectories, through a Markov Chain Monte Carlo (MCMC) EM algorithm. LaTRO \citep{chen2024languagemodelshiddenreasoners} demonstrated that models can self-reward latent reasoning paths — generating candidate thought sequences, scoring them by task success, and reinforcing the most effective ones -- through a variational framework. Concurrent work introduced BoLT \citep{ruan2025reasoninglearnlatentthoughts},  showing that leveraging these implicit traces as supervision leads to gains in data efficiency and performance for continued pre-training on complex reasoning benchmarks by converting latent chain-of-thought into a self-supervised learning signal.

\section{Self-Taught Principle Learning}\label{sec:algorithm}

We propose STaPLe, a self-improvement mechanism for
(1) discovering human-interpretable principles by the model itself, aimed towards response revision, and 
(2) training language models to \textit{intrinsically} invoke such principles and subsequently perform self-refinement of their generations (if needed) at inference time.

We view these principles as latent reasoning traces that bridge the gap between an initial model response and a reference target. In the vein of the Self-Taught Reasoner (STaR) \citep{zelikman}, we leverage the gold response as a "hint" to propose principles and guide response refinement decisions. However, our formulation is generic and allows for the use of non-verifiable gold responses as hints. In particular, we use the proximity of the generated response to the reference response as a signal of correctness. Any similarity metric can be used to measure this proximity --  the exact match metric, used for verifiable rewards, can be seen as one such instantiation. 

 Given a dataset $\mathcal{D} = \{(x_i,y_i^1,y_i^G)\}_{i=1}^n$, where $y_i^G$ is the gold response and $y_i^1$ is model's initial response for the $i^{th}$ sample, we aim to learn a latent response-improvement reasoning trace $z_i$ such that the probability of producing a response close to the gold reference is maximized. The latent reasoning trace, or \textit{principle}, $z_i$ is also verbalized as natural language, i.e. discrete text tokens from vocabulary $\mathcal{V}^*$. 
We implement STaPLe to optimize the following marginal log-likelihood:
%
\vspace{0.5mm}
\begin{equation}
\mathcal{L}(\theta) = \log p(y^G \mid x, y^1) = \log \sum_{y^2 \in \mathcal{V}^*} \sum_{z \in \mathcal{V}^*} p(y^G \mid x,y^1,z, y^2) \cdot p(y^2, z \mid x,y^1; \theta)
\end{equation}
%
where $y^2$ is a model refinement of the initial response $y^1$ generated with aid of latent principle $z$. The distribution $p(y^G \mid x, y^1, z, y^2)$ acts as a prespecified parameter-free validator model indicating the probability of the current revision $y^2$ matching the gold response $y^G$. This can be implemented in many different ways, e.g. LLM-as-a-judge, but for the remainder of this work, it will be based on a string similarity function $f$, with the rule
%
\vspace{0.5mm}
\begin{equation}
p(y^G \mid x, y^1,z,y^2) \propto \begin{cases}
    f(y^2,y^G) \hspace{0.5mm}-\hspace{0.5mm} f(y^1,y^G),& \text{if } f(y^2,y^G) \hspace{0.5mm}>\hspace{0.5mm} f(y^1,y^G)\\
    0,              & \text{otherwise.}
\end{cases}
\end{equation}
%
We parametrize $p(y^2,z \mid x , y^1)$ by the language model itself, with parameters $\theta$. The gradient for the STaPLe objective is:
\vspace{0.5mm}
\begin{align}\label{eq-1}
\nabla_\theta \hspace{0.5mm} \mathcal{L}(\theta) &=  \nabla_\theta \log \sum_{y^2 \in \mathcal{V}^*}\sum_{z \in \mathcal{V}^*} p(y^G \mid x, y^1, z, y^2) \cdot p(y^2,z \mid x,y^1; \theta)\nonumber\\
        &=  \sum_{y^2 \in \mathcal{V}^*}\sum_{z \in \mathcal{V}^*} \frac{p(y^G \mid x, y^1, z, y^2)}{p(y^G \mid x,y^1)}\hspace{0.5mm}\nabla_\theta \hspace{0.5mm} p(y^2,z \mid x,y^1; \theta)\nonumber\\
        &=  \sum_{y^2 \in \mathcal{V}^*}\sum_{z \in \mathcal{V}^*} \frac{p(y^G \mid x,y^1,z,y^2) \cdot p(y^2, z \mid x,y^1; \theta)}{p(y^G \mid x, y^1)}\nabla_\theta \log p(y^2,z \mid x, y^1; \theta)\nonumber\\
        &=  \mathbb{E}_{p(y^2,z \mid x,y^1,y^G)} \left\{\nabla_\theta \log p(y^2,z \mid x,y^1; \theta)\right\}
\end{align}
Optimizing this objective can be related to Expectation-Maximization (EM) learning and comprises the repeated application of two alternating stages: 1) discovery of the posterior distribution over principles and refinements (E-step), and 2) a principle learning stage (M-step). This is depicted in Figure \ref{fig:main-figure}. For details about the relationship to EM, see Appendix \ref{appendix:mc-em-gradient}.

In \textbf{principle discovery stage (E-step)}, we need to determine the posterior $p(y^2,z \mid  x,y^1,y^G)$. However, obtaining the true posterior would require an intractable marginalization over $\mathcal{V}^*$. Instead, we use a Monte Carlo approximation of the expectation in the STaPLe objective for which we only need samples of this posterior. To sample from the posterior, one usual method is rejection sampling, which allows us to sample from any unormalized score $g$
\begin{equation}
p(y^2,z \mid x,y^1,y^G) \propto g(y^2,z, x,y^1,y^G).
\end{equation}
It rests to determine what is a good score $g$ to approximate the posterior samples. A straightforward option is using the numerator of the Bayes theorem. Here, we instead use a cycle consistency score
\begin{equation}
p(y^2,z \mid x,y^1,y^G; \theta) \propto p(y^G \mid x,y^1,z, y^2) \cdot \tilde{p}(y^2,z \mid x,y^1,y^G; \theta).
\end{equation}
This allows us to use a proposal conditioned on the hint $y^G$, which can be expected to make the distribution of principles sharper and easier to learn, at the cost of bias. As with the Bayes theorem option, this uses the validator $p(y^G \mid x,y^1,z, y^2)$ to upweight samples that yield back the original gold answer $y^G$, given the refinement $y^2$ and the principle $z$. In this case, we still need to define a posterior proposal $\tilde{p}(y^2,z \mid  x,y^1,y^G; \theta)$. One possibility is to just use the language model's policy $\pi_{\theta}$ at the current phase of training and prompt it to produce the principle and refinement. However, this bears the danger of collapsing into the trivial solution $y^2 = y^G$. To avoid this, we purposefully factor the principle-proposing distribution as
\begin{equation}
\tilde{p}(y^2, z \mid  x,y^1,y^G; \theta) = p(y^2 \mid x, y^1, z; \theta) \cdot p(z \mid x, y^1, y^G; \theta)
\end{equation}
so that sampling happens in two stages and the gold response is only seen by the principle generation stage~\footnotemark\footnotetext{We acknowledge that in the absence of any constraint on z, this can still lead to copying via z. However, in practice, the prompt that elicit the principle creates a contextual bias against exact copy, with evidence included in Appendix \ref{appendix:precision-no-copying}. Moreover, we also experiment with explicit clustering constraints over z, which makes exact copy impossible, and show that both versions of our approach perform similarly.}. Setting this distribution as the proposal, we have that a sample pair
\begin{equation}
z_n \sim p(z \mid x, y^1, y^G; \theta), \quad y^2_n \sim p(y^2 \mid x, y^1, z_n; \theta)
\end{equation}
gets accepted with probability: 
\begin{figure}
    \centering
    \includegraphics[width=0.7\linewidth]{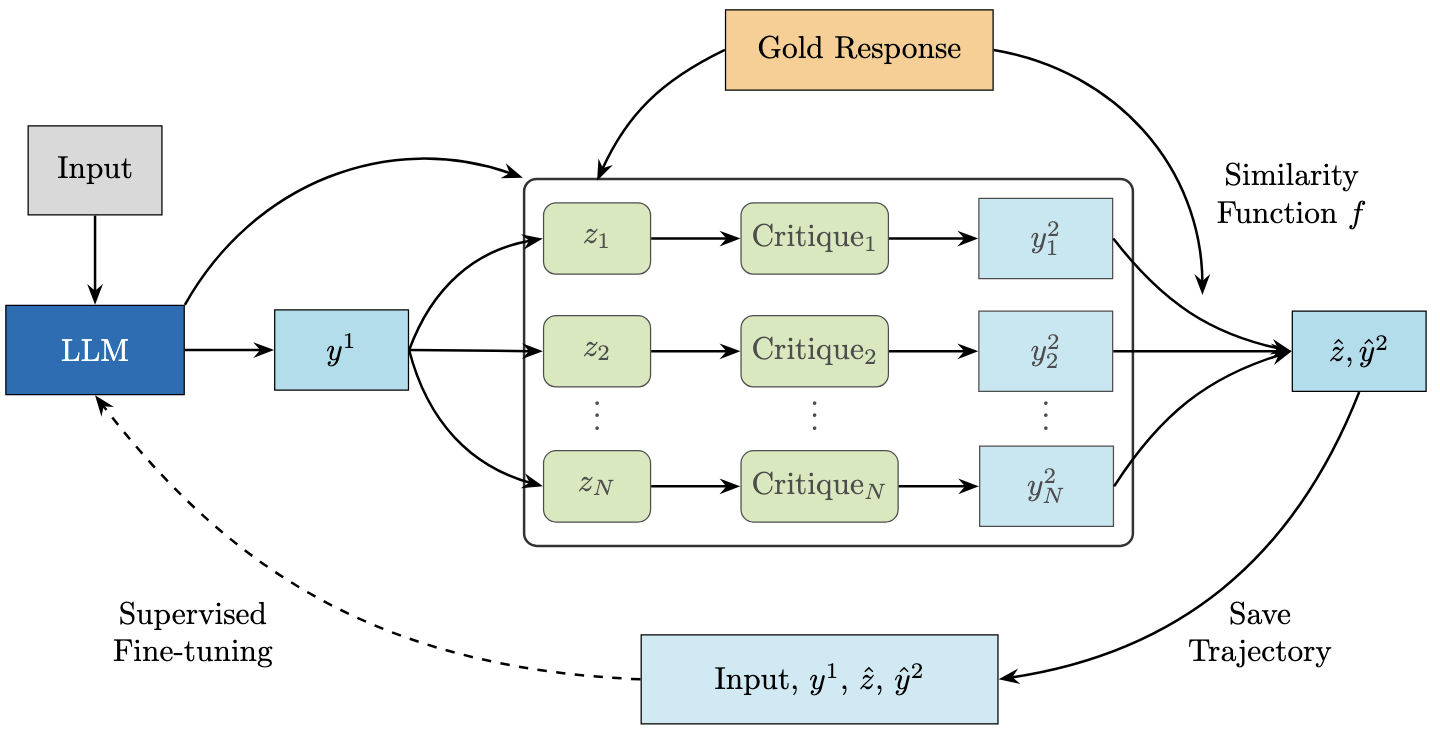}
    \caption{The figure above depicts the \textbf{\textit{principle discovery}} (E-step) phase. We sample an initial response $y^1$ on-policy, then "hint" with the gold response to elicit candidate principles $z_{(1:N)}$. Then, we sample critiques on the initial response (only used in rejection sampling, and not included in the fine-tuning trajectories), which we use to obtain principle-guided refined responses $y^2_{(1:N)}$. The best refined response $\hat{y}^2$ is selected based on similarity to the gold response. We save the resulting trajectory, which is used for supervised fine-tuning in the \textbf{\textit{principle learning}} (M-step) stage. }
    \label{fig:principle-discovery-tikz-figure}
\end{figure}
\begin{equation}
p_n = \frac{p(y^G \mid x, y^1, z_n,  y^2_n)}{\max\limits_{y \in \mathcal{V^*} z \in \mathcal{V^*}} p(y^G \mid x, y^1, z,  y^2)}
\end{equation}

where we use a sample approximation for the maximum as in conventional rejection sampling strategies. We include a derivation of this rejection sampling rule in Appendix \ref{appendix:rejection-sampling-rule}. 

In the practical implementation of the algorithm, there are two further aspects to consider. We include additional filtering where we compare the initial response $y^1$ to the gold reference $y^G$ using the similarity metric; if they are sufficiently close, we accept the response without further refinement and without sampling a $z$. Moreover, we only keep the sample with the highest $p(y^G \mid x, y^1, z_n,  y^2_n)$ (i.e. Best-of-N) as opposed to keeping all accepted samples or using lower variance estimates such as Rao-Blackwellization. This can be seen as a form of hard-EM following the insight that Best-of-N can be seen as rejection sampling for a temperature $\rightarrow 0$. We provide further details in Appendix \ref{appendix:hard-em}.

The principle discovery stage yields a principle-augmented dataset $(x \cup y_1, z, y_2) \in \mathcal{D}'$. Note that if no refinements improve upon the initial generation relative to the gold response, we discard the sample; thus, the dataset $\mathcal{D}'$ only consists of those samples on which a principle improved the quality of the response towards the gold. 

In the \textbf{principle learning stage (M-step)}, we use the data $\mathcal{D}'$ collected in the principle discovery stage for supervised fine-tuning of the language model. In particular, we train the model to maximize the log-likelihood of the refinement trajectories in $\mathcal{D}'$. The corresponding EM update can be written as:
\begin{equation}
\theta^{(t+1)} = \arg\max\limits_{\theta} \mathbb{E}_{(x,y^1,z,{{\hat{y}}^2}) \in \mathcal{D}'}[\log p(y^2,z|x,y^1; \theta)]
\end{equation}
Each optimization stage includes multiple gradient updates, with the data collected in the E-step being seen multiple times based on the number of epochs, after which a new batch of on-policy data is collected. The two stages are repeated, achieving incremental improvements till no further gains are seen with respect to the gold references or the maximum number of iterations is reached. Qualitatively, this should result in the fine-tuned LM being able to invoke principles conditioned on a prompt, and learning to produce high-quality responses conditioned on both the prompt and the invoked principle. We also draw a connection between STaPLe (this EM procedure) and variance-reduced self-play; this is discussed further in Appendix \ref{thrm-1-proof}. 

\subsection{Posterior Regularization via Clustering}\label{sec:pr-clustering}

To maximize the human interpretability of principles and their application relative to specific domains, it is beneficial to have a compressed set, or \textit{constitution}, to distill to the model. 
However, the E-step  described above, results in thousands of unique principles. We seek to project this set into a constrained subspace where the resulting principles serve as representatives for desirable attributes to be reflected. This can be achieved via posterior regularization (PR) in latent variable modeling.   
For a set of posteriors that satisfy the constraints $\mathcal{Q}$, the canonical posterior regularization framework solves the problem:
 \begin{equation}
 q^*(y^2,z \mid x,y^1,y^G) = \arg\min\limits_{q \in \mathcal{Q}} \mathrm{KL}(q(y^2,z \mid x,y^1,y^G) \mid\mid p(y^2,z \mid x,y^1,y^G))
 \end{equation}

which aims to project our unconstrained posterior into the space of posteriors that satisfy the constraints. From \cite{pr-latent-var-models}, we obtain that the primal solution is given by:
\begin{equation}
\tilde{p}(y^2,z \mid x,y^1,y^G) \propto p(y^2,z \mid x,y^1,y^G) \cdot \exp{(-\lambda g(z))}
\end{equation}
where $g(z)$ are the constrained features, which only concerns the principles, and $\lambda$ is a Lagrange multiplier that must be set such that the expected value of the features under $\tilde{p}$ satisfies the constraints. 

Consider the following cluster-based definition of the constraints: assume access to a clustering algorithm which yields a set of clusters $\{C_1, C_2, \dots C_K\}$. For each cluster, a representative element $\tilde{z}$ is chosen, forming the set $\tilde{Z} = \{\tilde{z}_1, \dots, \tilde{z}_K\}$. Now define $g(z) = \textbf{1}(z \notin \tilde{Z})$ as a binary feature function. Thus, to ensure that the regularized posterior only places mass on the representative elements, we can enforce the constraint set  $\mathcal{Q} = \{q: \mathop{\mathbb{E}}_q[g(z)] = 0 \}$

For our case, the posterior regularized objective has a trivial solution of $\lambda \rightarrow \infty$ for elements not in the cluster representative set $\tilde{Z}$, which only leaves the problem of computing a new partition function. Since we only need to sample from the posterior and we already do this through rejection sampling, retaining only the representative elements as a further stage of rejection sampling can be seen as a form of posterior regularization.  

In particular, we consider hierarchical (agglomerative) clustering for several of its benefits: (1.) it requires no assumptions about number of clusters or cluster shape a priori, (2.) the algorithm is deterministic, ensuring that the same clusters would be obtained for a given configuration, and (3.) the algorithm is relatively fast, only taking a few seconds in practice over thousands of principles. To ensure that the clustering is performed in a semantically-aware manner, we first obtain a sentence embedding with an encoder-only model and perform clustering over these embeddings, consolidating principles that are lexically close.  

Given the principle-augmented dataset $(x_i, y^1_i, z_i, y^2_i) \in \mathcal{D}'$ and a set $\tilde{Z}$ of cluster representative elements over $\mathcal{C} = \{C_i\}_{i=1}^k$, we aim to replace $\hat{z_i}$ with the element $\widetilde{z} \in \widetilde{Z}$ that is closest in meaning to the original principle. Qualitatively, we want the set $\widetilde{Z}$ to comprise the human-readable constitution, minimizing semantic overlap in its labels. We take the medoid as the cluster representative:

\begin{equation}
    \widetilde{Z}_{medoid} = \{m_k: m_k = \arg\min\limits_{m \in C_k} \sum_{j \in C_k} ||e_i - e_j||_2, \hspace{0.5mm} k \in [1,K]\}
    \tag{Medoid Representatives}
\end{equation}

It suffices to retrieve the corresponding cluster $C_i$ for a sample $i$ and replace $z_i$ with $\tilde{z}_i \in \widetilde{Z}_{medoid}$. The resulting dataset from this augmentation, $(x_i, y^1_i, \tilde{z}_i, y^2_i) \in \widetilde{\mathcal{D}}$ is then used to train the model. 
\section{LMs Can Self-Improve with Iterative Principle Discovery}\label{sec:section-4}

\subsection{Experimental Setup}\label{sec:expt-setup}

\paragraph{Mixed-Domain Input Corpus.} We form a corpus of 100k samples for the principle discovery phase, consisting of four datasets: Anthropic HH-RLHF \citep{bai2022traininghelpfulharmlessassistant}, UltraFeedback \citep{cui2024ultrafeedbackboostinglanguagemodels}, TL;DR \citep{tldr}, and HotpotQA \citep{yang-etal-2018-hotpotqa}, taken in equal proportion (i.e. 25k samples of each dataset, drawn randomly) and deduplicated by prompt. For all datasets, we use the existing, publicly-available human-annotated gold responses; for preference datasets, we take the chosen response to be the gold answer $y^G$. To run  STaPLe, we use the first 50k samples for iteration 1, to heavily bootstrap off the first iteration, and then use 10k samples for each iteration thereafter, such that the input prompts are unseen for each iteration. 

\paragraph{Models and Hyperparameters.} We evaluate three performant small language models: Llama-3.1-8B-Instruct \citep{grattafiori2024llama3herdmodels, meta-llama-3.1-8b-instruct}, Granite-3.1-8B-Instruct \citep{granite, ibm-granite-3.1-8b-instruct}, and Qwen2.5-7B-Instruct \citep{qwen2025qwen25technicalreport}. We also evaluate our method on Qwen3-32B to show that our approach scales to larger sizes (Appendix \ref{appendix:qwen3-32b}). We use the all-MiniLM-L6-v2 model \citep{sentence-transformers-all-MiniLM-L6-v2} from SentenceTransformers as the embedding model to compute medoids in our clustering approach. We use the Rouge-L F1 score \citep{lin-2004-rouge} to compare the similarity of candidate responses relative to the reference answer.  We also include an ablation in 
Appendix \ref{appendix:lm-as-a-judge-sim} using a prompted Phi-4 \citep{phi4} judge to score responses, leveraging additional compute to improve the quality of rejection sampling. We discuss all other major STaPLe algorithm and model training hyperparameters in Appendix \ref{appendix:hypers}. 

\paragraph{Baselines.} We compared our method against several baselines in both the single-iteration and multi-iteration settings, in addition to the scores of each model's initial policy. 

\begin{enumerate}
\item Prompted self-refinement to directly produce a self-critique and revision, akin to Self-Refine, without any principle or specific feedback criterion provided a priori. \item Supervised fine-tuning on the gold responses of the first 50k samples in the mining corpus.
\item Following SCoRe, we adopt a STaR-like baseline for intrinsic self-correction; we apply the STaPLe algorithm and perform supervised fine-tuning on the best refined response (without principle-based refinement trajectory). This will henceforth be referred to as "STaR". 
\end{enumerate}
We compare the STaPLe and STaR algorithms over four iterations -- this is performed over the same number of samples per iteration, i.e. 50k samples in the first iteration and 10k samples for each subsequent one. Naturally, the other baselines are performed for a single iteration. We evaluate both the unconstrained (without clustering) and constrained (with clustering) versions of the STaPLe algorithm, and compare their performance. 

\paragraph{Evaluation.} We evaluate on the MT-Bench \citep{mt-bench} and AlpacaEval-2.0-LC \citep{alpaca_eval, dubois2024length} datasets, instruction-following evaluations designed to reflect alignment abilities of LLMs in chat settings; these are scored using the GPT-4o model \citep{openai2024gpt4ocard}. We also use the Prometheus-8x7B-v2.0 model \citep{prometheus} on the IFEval \citep{zhou2023instructionfollowingevaluationlargelanguage} dataset, for fine-grained evaluation on principle-following rubrics, with additional experiments in Appendix \ref{appendix:prometheus-winrates}. At inference time, if a principle was invoked intrinsically given a prompt, the response is parsed to only score the refined generation, following the principle proposal -- this is done similarly for the STaR baseline. Otherwise, we score the full generated response, and no special parsing is performed. For the IFEval results, the win-rate is with respect to the principle invoked -- for example, if the principle is "Directness", the Prometheus judge assesses which response is more direct between the candidate generation and the generation from the initial policy. Given that the STaR baseline does not explicitly invoke a principle, we use the same principle invoked by the STaPLe model for that sample\footnote{While the STaR-like baseline may not have seen this exact principle, it is valuable to analyze the effect of explicitly training on principle labels to see how well the final responses reflect those principles invoked.}.

\subsection{Results}\label{sec:results}

\paragraph{Latent Principle Learning Improves Response Quality.}

\begin{table}
\footnotescript
  \caption{Comparison of the STaPLe algorithm (unconstrained and constrained) against the baselines. The scores reported below are an average over 5 runs for all benchmarks. T1 and T2 refer to the first and second turns and  WR is Win Rate.}
  \label{table:compare-against-baselines}
  \centering
  \begin{tabular}{ccccccl}
    \toprule
    Model    & MT-Bench (avg)  & MT-Bench (T1) & MT-Bench (T2) & AlpacaEval & IFEval (WR) \\
    \midrule
    \textbf{Llama-3.1-8B-Instruct} & & & & & \\
    \midrule
    Initial Policy & 7.46 & 8.09 & 6.83 & 26.9 & -- \\
    Self-Refine & 7.40  & 8.05  & 6.75 & 26.1 & 51.2\%   \\
    Gold-only SFT & 7.47 & 8.11  & 6.83 & 26.4 & 56.2\%    \\
    STaR Iter 4   & 7.56 & 8.11 & 7.00 & 31.8 & 62.3\% \\
    STaPLe Iter 4 & \textbf{7.71} & \textbf{8.13} & \textbf{7.30} & 33.4 & 68.9\% \\
    Constrained STaPLe Iter 4  & 7.70 & \textbf{8.13} & 7.28 & \textbf{34.9} & \textbf{69.1\%}  \\
    \midrule
    \midrule
    \textbf{Granite-3.1-8B-Instruct} & & & & & \\
    \midrule
    Initial Policy & 7.83 & 8.59 & 7.08 & 30.2 & -- \\
    Self-Refine & 7.86 & 8.63  & 7.10 & 31.7 & 57.1\%  \\
    Gold-only SFT  & 7.86 & 8.68  & 7.05 & 30.1 &  55.8\%   \\
    STaR Iter 4   & 7.96 & 8.68 & 7.25 & 35.6 & 62.1\% \\
    STaPLe Iter 4 & \textbf{8.04} & \textbf{8.69} & \textbf{7.41} & 38.4 & 67.6\% \\
    Constrained STaPLe Iter 4   & 8.03 & 8.65 & \textbf{7.41} & \textbf{38.8} & \textbf{68.4\%}  \\
    \midrule
    \midrule
    \textbf{Qwen2.5-7B-Instruct} & & & & & \\
    \midrule
    Initial Policy & 6.83 & 7.34 & 6.31 & 30.4 & -- \\
    Self-Refine & 6.91 & 7.41  & 6.40 & 30.7 & 58.4\%   \\
    Gold-only SFT & 6.89 & 7.43  & 6.35 & 30.0 & 56.9\%    \\
    STaR Iter 4 & 7.14 & 7.63 & 6.66 & 37.8 & 68.4\% \\
    STaPLe Iter 4 & \textbf{7.24} & \textbf{7.64} & \textbf{6.85} & \textbf{40.2} & \textbf{73.4\%} \\
    Constrained STaPLe Iter 4  & 7.22 & 7.60 & 6.84 & 39.9 & 72.1\%  \\
    \bottomrule
  \end{tabular}
\end{table}

The STaPLe algorithm outperforms the baselines on all benchmarks, across all models, as seen in Table \ref{table:compare-against-baselines}. The MT-Bench average exceeds the best baseline by an average of +0.11 over the three models, with the Turn 2 increasing by an average of +0.22. The AlpacaEval win-rates improve over the initial policy by +5.3-7\%, and improves over the STaR baseline by +1.6-2.8\%.  Furthermore, the IFEval win-rates on principle-following of the refined generations against the base policy using Prometheus improve by +5-6.6\%. This suggests that training models to \textit{explicitly invoke the principle} as an expressive form of a latent attribute is effective, as opposed to implicitly learning over this by simply training only on the refined responses (i.e. the STaR baseline). The Self-Refine baseline improves performance for the Granite and Qwen models, but not  Llama-8B, suggesting that it is not as effective in zero-shot self-refinement without pre-identified principles. This corresponds with a higher IFEval win-rate for those models with strong self-refinement abilities.

\paragraph{Iterative Principle Discovery Enables Self-Improvement.}

The results in Table \ref{table:compare-against-baselines} demonstrate the performance of our method in the fourth iteration of the Monte Carlo EM algorithm. Our algorithm outpaces the STaR baseline by a sizable margin throughout the execution of both algorithms; we include the full set of results in Appendix \ref{complete_table}. In iteration 3, the STaPLe scores outperform STaR and the initial policy on average across the three models by $+0.16$ and $+0.29$ on MT-Bench (avg.); +3.6\% and +9.2\% on AlpacaEval win-rate; and +7.9\% and +21.0\% on IFEval principle-following win-rate, respectively.
We observe slight diminishing returns for Llama-8B and Granite-8B, as in iteration 4, the scores either remain at a similar level or drop slightly; however, Qwen-7B continues to improve on all three benchmarks. We further analyze principle-following quality in Appendix \ref{appendix:prometheus-winrates} and stepwise win-rates of iteration $t$ against iteration $t-1$ in Appendix \ref{appendix:stepwise-winrates} to reinforce the self-improvement induced by STaPLe. In Appendix \ref{intrinsic-self-correction}, we demonstrate that the model's intrinsic self-correction ability improves over the iterations. Moreover, we include an ablation in Appendix \ref{appendix:inf-time} showing that self-refinement improves when providing the constitution for the model to select principles from at inference-time. 

\paragraph{Clustering Balances Interpretability and Performance.}

In Table \ref{table:compare-against-baselines}, we also include the performance of "constrained" STaPLe -- the version of the algorithm with agglomerative clustering following the E-step during each iteration, and use the medoids of each cluster as a representative principle to yield dataset $\widetilde{\mathcal{D}}$. We find that this largely matches the performance of the "unconstrained" version, in fact outperforming it in AlpacaEval and IFEval win-rates for Llama-8B and Granite-8B. The full results can be found in Appendix \ref{sec:clustering-methods}, where we ablate across different label replacement schemes (medoids, modes, and a perplexity-based method). For both  versions, we observe a strong correlation in the trend  (avg. $\rho \approx 0.95$-$0.96$) between the MT-Bench (avg.) and AlpacaEval results. 

\begin{figure}[t]
\caption{Principle discovery rates of the STaPLe algorithm in the unconstrained (left) and constrained (right) settings. This represents the fraction of the trajectories saved from the principle discovery process (E-step) that contain a unique principle label that was unseen in previous iterations.}

  \label{fig:principle-discovery-rates}
  \centering
  \begin{minipage}{0.48\textwidth}
    \centering
    \includegraphics[width=\linewidth]{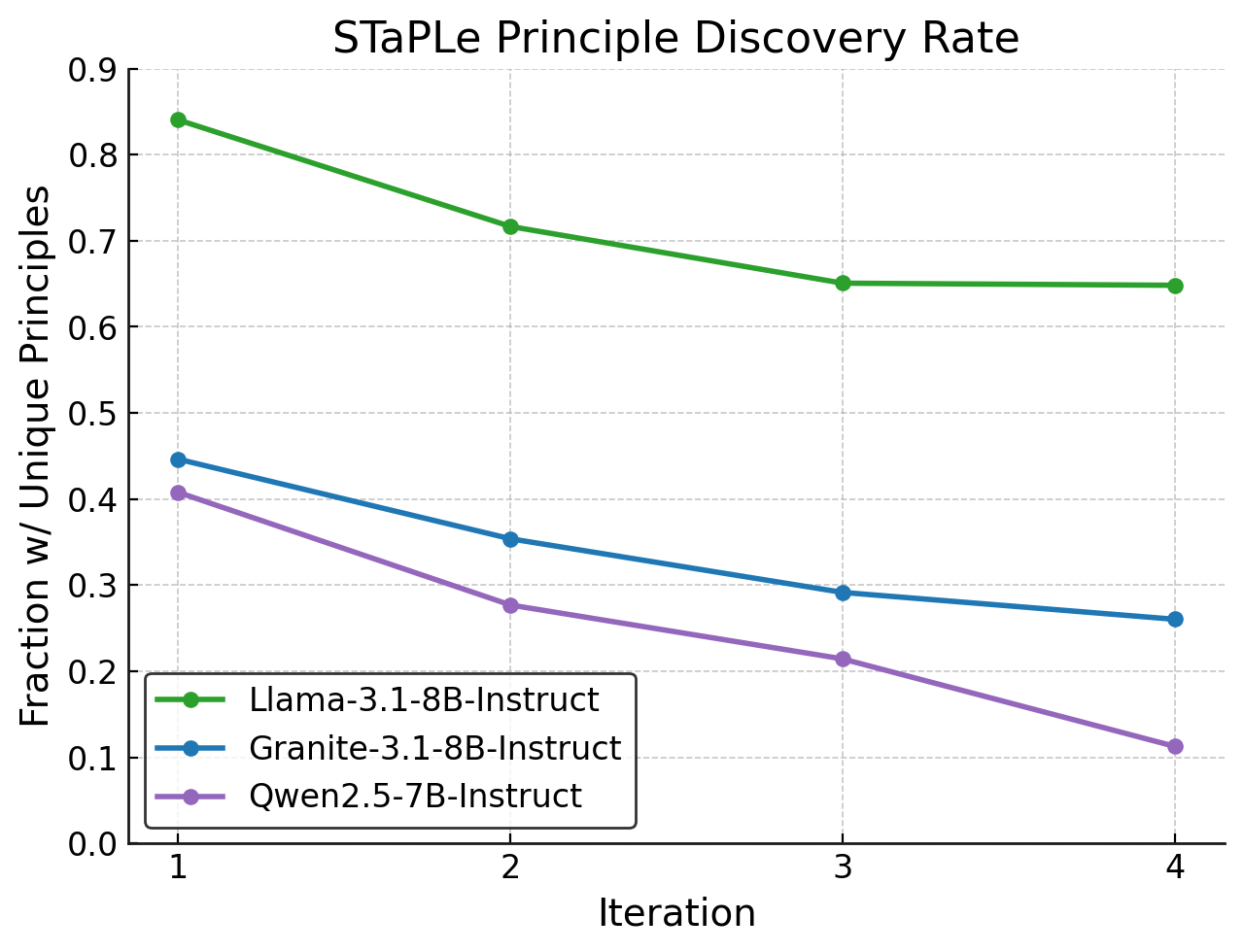}
  \end{minipage}\hfill
  \begin{minipage}{0.48\textwidth}
    \centering
    \includegraphics[width=\linewidth]{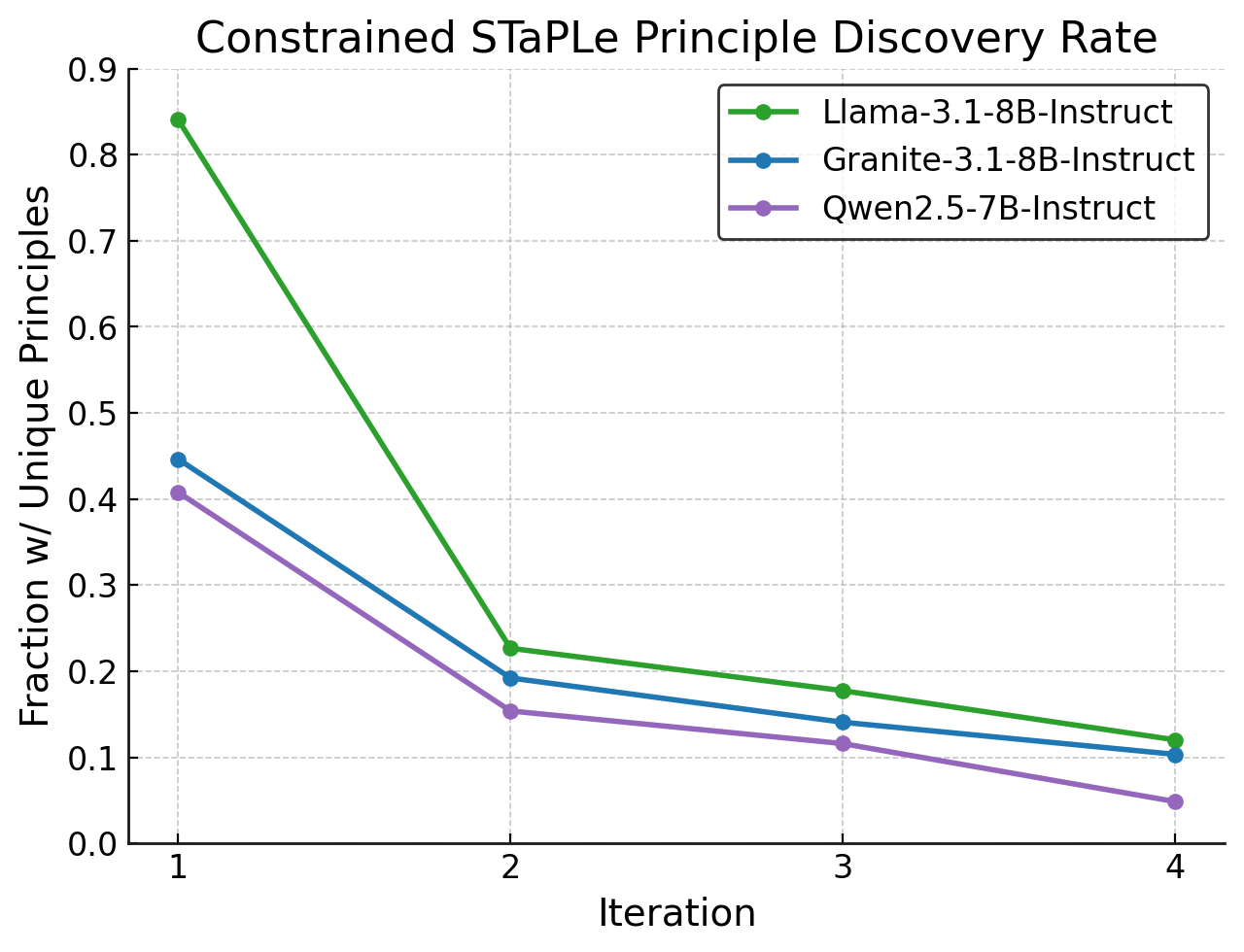}
  \end{minipage}
\end{figure}

\subsection{Analysis of Principle Discovery}\label{sec:principle-discovery-analysis}

It is also valuable to study the nature of the principle discovery process and the model-generated constitutions that we have aligned the language model toward. We include the full constitutions and perform more qualitative analysis on their distribution of elements  when performing label replacement ("density" of the constitution) in Appendix \ref{appendix:constitutions}.  In Figure \ref{fig:size-of-constitution-over-iterations}, we show that the number of principles in the constitution under Constrained STaPLe decreases over the iterations, suggesting that the model converges to learning a relatively stable distribution of  principles. In particular, the size of the constitution by iteration 4 is roughly 50\% of the iteration 1 size, or even smaller.

This finding is reinforced by an analysis of the principle discovery rate -- the fraction of refinement samples with new, unseen principles -- in Figure \ref{fig:principle-discovery-rates}. We show that this rate decreases over the iterations under both the unconstrained and constrained versions of the STaPLe algorithm, suggesting that all models learn to re-use principles accordingly. The observation that constrained STaPLe helps to accelerate this convergence to a condensed set of principles reinforces the motivation behind the introduction of clustering as a posterior regularization mechanism. This also highlights one of the advantages of using the LM to approximate the posterior distribution, as the changing nature of the learned posterior can be observed over the iterations and elicited via on-policy sampling.

\section{Discussion and Limitations}\label{sec:discussion-limitations}

The STaPLe algorithm guides a largely autonomous self-improvement process, with the exception of a few hyperparameters that are to be set, discussed in Appendix \ref{appendix:hypers}. As a result, the algorithm does not require human supervision beyond the labels in the curated (publicly-available) mining corpus. This work focuses primarily on smaller, non-reasoning language models, which struggle with self-refinement. Nonetheless, we find that STaPLe is also effective for the  Qwen3-32B reasoning model to self-improve (see Appendix \ref{appendix:qwen3-32b}), an encouraging sign for further scaling its applications.

While the distribution of principles is mined relative to the mining corpus' task distribution, the STaPLe algorithm itself is task-agnostic, and can be used for any distribution of datasets where a reliable gold reference exists, or with paired preference data. However, designing a task-aware version of the STaPLe algorithm may reveal further insights into the model's task-dependent self-correction mechanisms while inducing a curriculum. In this work, we focus on the two-turn self-correction setting -- at the same time, interesting insights could be extracted regarding the compositional nature of principles when extending to further refinement attempts, which also yields more diverse trajectories (combinatorially many possible), even over a condensed set of principles.

A core aim of alignment research is to balance human-readability with machine-readability. The STaPLe algorithm succeeds in achieving this by discovering principles that are useful to the model for self-correction, while compressing them to a smaller set via clustering for a human reader to analyze. We believe that this work and the notion of LM self-improvement keeps with the theme of the Bitter Lesson \citep{sutton2019bitter}, when facilitated in a relatively autonomous fashion. 
Specifically, we aim to limit the influence of human-driven priors or constraints on the algorithm; this is reflected further by our clustering technique, and our ablation in Appendix \ref{appendix:bayesian-hypers} to fully automate this as well. 
At the same time, we acknowledge the value of human oversight on the alignment process; as such, we believe that human-in-the-loop analysis of the principles as a post-processing mechanism following the E-step of each iteration would be valuable to further mitigate misalignment risks or potentially harmful principles (see Appendix \ref{appendix:ethics}). 

Our work introduces principles as an expressive form of a latent chain-of-thought, relying on the power and scale of pre-training to bootstrap our mechanism in post-training. One can view our constitutions as general-purpose abstractions that inform generation quality -- such a notion can be easily extended to new domains, such as difficult reasoning, or for multi-aspect verification. It provides a concrete verbalization of the dimensions along which Pareto improvement is desirable to maximize instruction-following capabilities. Moreover, they can serve to guide further training, including online reinforcement learning with STaPLe as a warm-up, and integration with memory frameworks for continually discovering and revising target attributes. The evidence that the discovered constitutions are useful both in iteratively training LMs for self-correction as well as through their inference-time application (see Appendix \ref{appendix:inf-time}) presents a promising outlook for this research direction.

\section{Conclusion}\label{sec:conclusion}

We introduced a new language model self-improvement method which uses model-generated latent principles to learn intrinsic self-correction. These serve a purpose akin to a reasoning chain-of-thought, boosting the quality of LM generations on alignment-focused benchmarks. Furthermore, our  posterior-regularized Monte Carlo EM algorithm shows that the model can continue to improve over multiple iterations, while simultaneously compressing the principles to a human-interpretable constitution. We also show that our clustering approach balances performance with the diversity of the generated constitution, thus adding valuable utility to the STaPLe algorithm. The efficacy of STaPLe highlights the potential for constitutional alignment with self-generated principles to improve model responses in an interpretable manner with minimal human supervision.

\paragraph{Acknowledgments.}The authors would like to thank Sara Rosenthal, Yikang Shen, Radu Florian, and Salim Roukos for valuable input and feedback during this work. KR would also like to thank Brian Williams and Arduin Findeis for helpful discussions related to this work and the impacts of principle discovery, and Sriram Tolety for feedback on a draft of this paper. 

\bibliographystyle{abbrvnat}



\newpage
\appendix

\addcontentsline{toc}{section}{Appendix}
\part{\Large{Appendix}} 
\parttoc

\newpage
\section{Formal Description of STaPLe Algorithm}\label{appendix:staple-algorithm}

We provide a full, formal description of the STaPLe algorithm below. We use $y^1$ and $y^2$ notationally to avoid confusion with the sample indices. We use general variables for components which may be ablated on: the similarity function $f$, clustering algorithm $\mathcal{C}$ and label replacement scheme $\mathcal{R}$. We leave the $M$-step in terms of the dataset $D'$ for generality, although if clustering were to be performed, one would use $\widetilde{\mathcal{D}}$ instead. 

\begin{algorithm}[h]
\caption{Self-Taught Principle Learning (STaPLe)}\label{alg:STaPLe}
\begin{algorithmic}[1]
\Require 
  Dataset $\displaystyle \mathcal{D}=\{(x_i,y^G_i)\}_{i=1}^n$, pretrained LM parameters $\theta^{(0)}$, 
  number of EM iterations $T$,  
  number of principle samples $N$,  
  similarity threshold $\tau$,  
  similarity function $f(\cdot,\cdot)$; (optional) embedding model $\mathrm{EMB}$, clustering algorithm $\mathcal{C}$, and cluster representative scheme $\mathcal{R}$
\For{$t=0,\dots,T-1$}
  \State \textbf{E-step:} initialize $\mathcal{D}'\leftarrow\varnothing$.
  \For{each $(x_i,y^G_i)\in\mathcal{D}$}
    \State Sample initial response \;$y_i^1\sim \pi_{\theta^{(t)}}(\cdot\mid x_i)$.
    \If{$f(y_i^1,y^G_i)<\tau$}  \Comment{needs refinement}
      \State Draw principles 
        $\{z_i^{(j)}\}_{j=1}^N \sim p_{\theta^{(t)}}\bigl(z\mid x_i,y_i^1,y^G_i\bigr)$.
      \For{$j=1,\dots,N$}
        \State Generate critique 
          $c_i^{(j)}\leftarrow\mathrm{Critique}\bigl(y_i^1,\,z_i^{(j)}\bigr)$.
        \State Sample refinement 
          $y_i^{2,(j)}\sim \pi_{\theta^{(t)}}\bigl(\cdot\mid x_i,y_i^1,z_i^{(j)},c_i^{(j)}\bigr)$.
      \EndFor
      \State $j^*\leftarrow\arg\!\max_j\,f\bigl(y_i^{2,(j)},y^G_i\bigr)$
      \State $(z_i,\,y_i^2)\leftarrow\bigl(z_i^{(j^*)},\,y_i^{2,(j^*)}\bigr)$
      \If{$f(y_i^2,y^G_i)>f(y_i^1,y^G_i)$}
        \State Add trajectory $(x_i,\,y_i^1,\,z_i,\,y_i^2)$ to $\mathcal{D}'$.
      \EndIf
    \EndIf
  \EndFor
  \State \textbf{(Optional):} Cluster the principles to a smaller set in augmented dataset $\widetilde D$ 
    \Statex \hspace{\algorithmicindent}
           Clusters $C \gets \mathcal C\bigl(\mathrm{EMB}(\{z_i\})\bigr)$
    \Statex 
    \hspace{\algorithmicindent} Assign cluster representatives (e.g. Medoid) 
           $\widetilde Z \gets \mathcal R(C)$
    \Statex \hspace{\algorithmicindent} Augment dataset
           $\widetilde{\mathcal{D}} \gets \mathrm{Rep}(\mathcal D',\,\widetilde Z)$
    \State \textbf{M-step:}
    \[
      \theta^{(t+1)}
        \;\leftarrow\;
      \arg\max_{\theta}
      \sum_{(x,y^1,z,y^2)\in\mathcal D'} 
        \log p_{\theta}\bigl(y^2,\,z \mid x,\,y^1\bigr).
    \]
\EndFor
\Ensure Final LM parameters $\theta^{(T)}$
\end{algorithmic}
\end{algorithm}

\section{Reproducibility Statement}\label{appendix:reproducibility}

In addition to the algorithm description above (Algorithm \ref{alg:STaPLe}) and experimental details in Section \ref{sec:expt-setup}, we include the hyperparameters used and model training details in Appendix \ref{appendix:hypers} and the prompts used in the STaPLe algorithm in Appendix \ref{appendix:prompts}. We make all evaluation results available in tabular format throughout the main paper and the appendices for comparability. We also will publicly release the code for the STaPLe algorithm, to further facilitate reproducibility of our self-improvement method. 

\section{STaPLe Hyperparameters and Training Details }\label{appendix:hypers}

\paragraph{STaPLe Algorithm Hyperparameters.} We use a Rouge-L F1 threshold of 0.4 for the similarity threshold ($f(y,y^G)$ -- if the initial response exceeds this threshold, we do not pursue refinement). For the ablation using a Phi-4 judge in Appendix \ref{appendix:lm-as-a-judge-sim}, the threshold was set to be 9 (on a scale of 1-10). The other major hyperparameters involved in the execution of the STaPLe algorithm are $N$, the number of principles to sample, and the distance threshold for the clustering algorithm. STaPLe requires an inference time budget of $3N+1$ for the Rouge-L version, and $4N+2$ for the LLM-as-a-judge version -- we set $N=16$ to balance runtime per iteration of the algorithm with sufficient exploration of diverse principles. During principle discovery, we sample principles, critiques, and responses at a temperature of $0.7$;  the maximum number of tokens for principle proposal and critique is set at $500$, and is set at $1024$ for the refined response. We use $4\times$H100 Nvidia GPUs for the principle discovery phase, with a separate vLLM \citep{kwon2023efficient} instance per GPU.

We set a distance threshold $\delta$ to avoid setting a specific target number of clusters when performing agglomerative clustering. The current results involve manually setting a distance threshold, where the authors analyzed the resulting set of clusters and for each of the first three iterations, ensured that there are at least 30 clusters. Fortunately, given the speed of agglomerative clustering, this is fairly easy to do. For the first iteration, the Euclidean distance thresholds were set at 8 (Llama and Qwen) and 6 (Granite); for iterations 2-4, the thresholds were decreased to 7 and 5, respectively. Alternatively, one could automate this process by designing an objective over the diversity (semantic or surface-level) of the cluster medoid labels and performing a hyperparameter search. This is explored further in Appendix \ref{appendix:bayesian-hypers} using Bayesian hyperparameter optimization tools to identify an appropriate, model-specific distance threshold. The threshold $\tau_{PPL}$ for the perplexity difference label-replacement scheme, described in Appendix \ref{sec:clustering-methods}, was set at $0.2$.

\paragraph{Model Training.} We perform full supervised fine-tuning for 3 epochs at a learning rate of $1\times10^{-6}$ with the AdamW optimizer \citep{loshchilov2018decoupled}, with a sequence length of 4096.  
All experiments were performed on $8\times$H100 Nvidia GPUs. 

\section{Derivation of the Monte Carlo EM Gradient}\label{appendix:mc-em-gradient}

Recall that the conditional log-likelihood is defined as:

\begin{equation}
\mathcal{L}(\theta) = \log p(y^G \mid x, y^1) = \log \sum_{y^2 \in \mathcal{V^*}} \sum_{z \in \mathcal{V^*}} p(y^G \mid x, y^1, z,  y^2) \cdot p(y^2,z \mid x,y^1)
\end{equation}

The usual EM formulation is given from the Variational Lower Bound (ELBO):

\begin{align}
\mathcal{L}(\theta) &=  \log p(y^G \mid x, y^1)\nonumber\\
         &=  \log \sum_{y^2 \in \mathcal{V}^*}\sum_{z \in \mathcal{V}^*} p(y^G \mid x, y^1, z, y^2) \cdot p(y^2,z\mid x,y^1) \nonumber\\
         &=  \log \sum_{y^2 \in \mathcal{V}^*}\sum_{z \in \mathcal{V}^*}  q(z, y^2 \mid x, y^1) \cdot \frac{p(y^G, y^2, z \mid x, y^1)}{q(z, y^2 \mid x, y^1)} \nonumber\\
        &\geq \sum_{y^2 \in \mathcal{V}^*}\sum_{z \in \mathcal{V}^*}  q(z, y^2 \mid x, y^1) \cdot \log \left(\frac{p(y^G, y^2, z \mid x, y^1)}{q(z, y^2 \mid x, y^1)} \right)  \nonumber\\
        &=  \log p(y^G \mid x, y^1) - \mathrm{KL}(q \mid\mid p(y^2,z \mid x,y^1,y^G))  \nonumber\\
        &=  \mathbb{E}_{q(z, y^2 \mid x, y^1)}\left\{\log p(y^G \mid x, y^1, z, y^2) + \log p(y^2,z\mid x,y^1)\right\} + \mathrm{H}(q).
\end{align}

From the fourth line it is clear that the bound is tight only if the $\mathrm{KL}$ is zero i.e. the variational distribution is equal to the posterior over the latent

\begin{equation}
q(z, y^2 \mid x, y^1) = p(y^2,z \mid x,y^1,y^G).
\end{equation}

Once this is determined (E-step) it rests to optimize the last line (M-step). For a single gradient update, this yields the STaPLe update rule

\begin{equation}
\label{eq-1}
\nabla_\theta\mathcal{L}(\theta) = \mathbb{E}_{p(y^2,z \hspace{0.5mm}\mid \hspace{0.5mm} x,y^1,y^G)} \left\{\nabla_\theta \log p(y^2,z \mid x,y^1; \theta)\right\}.
\end{equation}

Since we do not derive a closed form solution for $ p(y^2,z \mid x,y^1,y^G)$ but rather we only obtain samples from it, this algorithm is better understood as related to Monte Carlo EM \citep{mc-em}.

\section{Derivation of Rejection Sampling Rule}\label{appendix:rejection-sampling-rule}

We here refresh rejection sampling and show the corresponding STaPLe formulas. Rejection sampling or accept/reject sampling \citep{vonNeumann1951RandomDigits} allows to sample from distributions with an intractable partition function 

\begin{equation}
p(y^2,z \mid x,y^1,y^G) = \frac{g(y^2,z, x,y^1,y^G)}{Z} 
\end{equation}

by sampling from a proposal distribution $q(y^2, z)$ and accepting the sample with probability 

\begin{equation}
p_n = \frac{1}{M} \cdot \frac{p(y^2,z \mid x,y^1,y^G)}{q(y^2, z)}
\end{equation}

where 
\begin{equation}
M = \max_{y^2 \in \mathcal{V}^* z \in \mathcal{V}^*} \frac{p(y^2,z \mid x,y^1,y^G)}{q(y^2, z)}
\end{equation}

scales the ratio to ensure that it is a valid probability. Note $Z$ cancels out when replacing the last equation into the previous, making the probability not dependent from the partition function.

For STaPLe we have defined 

\begin{equation}
g(y^2,z, x,y^1,y^G) = p(y^G \mid x,y^1,z, y^2) \cdot \tilde{p}(y^2,z\mid x,y^1,y^G)
\end{equation}

and

\begin{equation}
q(y^2, z) = \tilde{p}(y^2, z \mid x,y^1,y^G)
\end{equation}

leading to 

\begin{align}
p_n & = \frac{1}{\max\limits_{y \in \mathcal{V^*} z \in \mathcal{V^*}} \frac{p(y^G \mid x,y^1,z, y^2) \cdot \tilde{p}(y^2,z\mid x,y^1,y^G)}{Z \cdot \tilde{p}(y^2,z\mid x,y^1,y^G)}}\nonumber\\ 
& \cdot \frac{p(y^G \mid x,y^1,z, y^2) \cdot \tilde{p}(y^2,z\mid x,y^1,y^G)}{Z \cdot \tilde{p}(y^2,z\mid x,y^1,y^G)}\nonumber\\
& = \frac{p(y^G \mid x, y^1, z_n,  y^2_n)}{\max\limits_{y \in \mathcal{V^*} z \in \mathcal{V^*}} p(y^G \mid x, y^1, z,  y^2)}
\end{align}

\section{Hard EM and Best-of-N in Monte Carlo EM}\label{appendix:hard-em}

Hard Expectation Maximization \citep{EM-reference} refers to EM algorithms where the posterior over the latents is replaced by the greedy most probable output of the posterior e.g. obtained via Viterbi decoding for classical Hidden Markov Model latent variable models. In other words, we collapse the posterior over its most likely output. We can use the result in \citep[S~3.2]{rso} to prove that this coincides with Best-of-N choice according to the score $p(y^G \mid x, y^1, z_n,  y^2_n)$ under conditions that our algorithm meets. 

For this, we consider our score-based distributions with an additional temperature term $\tau$ in our score

\begin{equation}
p(y^2,z \mid x,y^1,y^G) = \frac{g(y^2,z, x,y^1,y^G)^{\frac{1}{\tau}}}{Z} 
\end{equation}

It is self-evident that for $\tau \rightarrow 0$ this distribution collapses over its most likely value. For the values set in our algorithm (see Appendix \ref{appendix:rejection-sampling-rule}), and also using this temperature in the proposal, we get

\begin{equation}
p_n = \exp\left(\log p(y^G \mid x, y^1, z_n,  y^2_n)^{\frac{1}{\tau}} - \max\limits_{y \in \mathcal{V^*} z \in \mathcal{V^*}} \log  p(y^G \mid x, y^1, z,  y^2)^{\frac{1}{\tau}}\right).
\end{equation}

As long as we approximate $\max$ using our set of samples, as is customary in rejection sampling, there will always be one output with probability $1$. This is true even if $\tau \rightarrow 0$, which implies that, under these conditions, Monte Carlo Hard EM can be implemented by simply picking the  $\arg\max$ of the score.

\section{Self-Play Equivalence}\label{thrm-1-proof}

The STaPLe Monte Carlo EM approach can equivalently be described through the lens of \textit{self-play}, somewhat akin to SPIN \citep{SPIN}. That is, we can formulate a two-player game wherein the adversary produces a response, and the agent's role is to 1. produce a revised response to the prompt that improves over the adversary's generation relative to the gold, and 2. specify the dimension or aspect on which it improved over the adversary. In the first iteration, we take the same LM to play the both roles. In subsequent iterations, given the policy $\pi_{\theta}$ has now learned self-correction behavior, we take the initial response (opponent's generation) as the starting point, which we posit to be similar to generations sampled from the base policy $\pi_0$ -- that is, $y^a \sim \pi_{0}(\cdot \mid x) \approx y^b \in (y^b, z, y^c)\sim \pi_{\theta}(\cdot \mid x)$. At the same time, the agent's policy updates to $\pi_{\theta}$, which learns principle-conditioned self-refinement, thus improving the agent's ability to perform its primary objectives. 

Formally, we can define the self-play advantage of the refinement over the adversary's generation as \[A(y^2,y^1; x,y^G) = f(y^2,y^G) - f(y^1,y^G)\] Recall that in the STaPLe algorithm, if the agent "loses" -- that is, it fails to produce a refinement that improves over the initial response -- the sample is discarded. The nature of the advantage depends on the instantiation of the similarity function $f$; for instance, under exact match, this collapses to a binary indicator. The objective in the self-correction setting is to maximize the expected advantage under $\pi_{\theta}$:

   \[J(\theta) = \mathop{\mathbb{E}}_{y^1,z,y^2 \sim \pi_{\theta}}[A(y^2,y^1;x,y^G)]\]

The score-function gradient is thus:
\begin{align}
    \nabla_{\theta}J(\theta) = \mathop{\mathbb{E}}_{y^1,z,y^2 \sim \pi_{\theta}}[A(y^2, y^1;x,y^G)\nabla_{\theta}\log \pi_{\theta}(y^1,z,y^2 \mid x)]
\end{align}

Given that $y^1$ is treated as being sampled from a fixed policy, we can write this gradient in terms of $z$ and $y_2$:
\begin{equation}
    \label{eq-2}
    \nabla_{\theta}J(\theta) = \mathop{\mathbb{E}}_{(z,y^2) \sim \pi_{\theta}}[A(y^2, y^1; x, y^G)[\nabla_{\theta} \log \pi_{\theta}(z \mid x, y^1) + \nabla_{\theta} \log \pi_{\theta}(y^2 \mid x, y^1, z)]]
\end{equation}

\begin{theorem}[Equivalence of EM and Self-Play Gradients]
Assume the setting of an input $x$, an initial model response $y^1 \sim \pi_{\theta}(\cdot \mid x)$, a latent principle $z \sim \pi_{\theta}(\cdot \mid x,y^1,y^G)$, and a refinement $y^2 \sim \pi_{\theta}(\cdot \mid x,y^1,z)$. Then, the EM gradient for the STaPLe algorithm is equivalent to the REINFORCE score-function gradient under variance-reduced self-play, given by Equation \ref{eq-2}, under the the self-play advantage and the validator assignment \[v(y^2\mid x,y^1) = \textbf{1}(f(y^2,y^G) > f(y^1,y^G))\] 
    
\end{theorem}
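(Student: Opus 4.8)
The plan is to show that, once the validator $p(y^G \mid x, y^1, z, y^2)$ is specialized to the sparse indicator $\mathbf{1}(f(y^2,y^G) > f(y^1,y^G))$, the expectation over the (approximate) posterior appearing in Equation \ref{eq-1} can be rewritten as an expectation over the \emph{prior} sampling path $y^1, z, y^2 \sim \pi_\theta$ weighted by that indicator, and that this coincides exactly with the REINFORCE gradient in Equation \ref{eq-2} after identifying the advantage $A(y^2,y^1;x,y^G)$ with the same indicator (up to the positive constant that survives the sparse construction). Concretely, I would start from the right-hand side of Equation \ref{eq-1}, expand the posterior $p(y^2,z \mid x,y^1,y^G)$ via Bayes' rule as $p(y^G \mid x,y^1,z,y^2)\, p(y^2,z\mid x,y^1;\theta) / p(y^G\mid x,y^1)$, and substitute the log-derivative identity $p(y^2,z\mid x,y^1;\theta)\nabla_\theta \log p(y^2,z\mid x,y^1;\theta) = \nabla_\theta p(y^2,z\mid x,y^1;\theta)$ to land on the unnormalized form $\tfrac{1}{p(y^G\mid x,y^1)}\mathbb{E}_{y^2,z\sim\pi_\theta}[\,p(y^G\mid x,y^1,z,y^2)\,\nabla_\theta\log p(y^2,z\mid x,y^1;\theta)\,]$.

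Next I would handle the initial response $y^1$. In Equation \ref{eq-1} the conditioning is on a fixed $y^1$, whereas Equation \ref{eq-2} samples $y^1\sim\pi_\theta(\cdot\mid x)$ as part of the rollout. I would note that both gradients are to be understood per-example and then averaged over the data distribution of $(x,y^1)$; since $y^1$ is itself an on-policy sample in the STaPLe E-step, taking the outer expectation over $y^1\sim\pi_\theta(\cdot\mid x)$ is consistent, and the score term $\nabla_\theta\log\pi_\theta(y^1\mid x)$ that would appear is exactly the one the M-step also trains on (the trajectory likelihood $\log p(y^2,z\mid x,y^1)$ in the joint model includes the $y^1$ factor), so the two match at the level of the full trajectory $(y^1,z,c,y^2)$. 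I would make this precise by writing the joint sampling density as $\pi_\theta(y^1\mid x)\,\pi_\theta(z\mid x,y^1,y^G)\,\pi_\theta(y^2\mid x,y^1,z,c)$ with $c$ a deterministic (or marginalized-out) function, and observing that $\nabla_\theta\log$ of this joint is precisely what multiplies the advantage in Equation \ref{eq-2}.

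Then I would plug in the specific validator: with $p(y^G\mid x,y^1,z,y^2)=\mathbf{1}(f(y^2,y^G)>f(y^1,y^G))$, the weight inside the expectation becomes the indicator, and $p(y^G\mid x,y^1)=\mathbb{E}_{y^2\sim\pi_\theta}[\mathbf{1}(f(y^2,y^G)>f(y^1,y^G))]$ is just a positive normalizing scalar independent of the summation variable; one can absorb it or note it rescales the gradient by a positive constant (equivalently a learning-rate reparametrization), which does not affect the ascent direction. Finally I would observe that under the stated advantage $A(y^2,y^1;x,y^G)=f(y^2,y^G)-f(y^1,y^G)$, the \emph{sign} of $A$ is exactly the indicator, and because discarded (``losing'') samples contribute zero in both formulations, the clipped-to-positive advantage used in STaPLe and the indicator-weighted EM expectation produce identical gradient directions; for the exact-match instantiation the advantage collapses to the indicator outright and the identity is literal. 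I expect the main obstacle to be the bookkeeping around $y^1$: making airtight the claim that conditioning on a fixed $y^1$ in Equation \ref{eq-1} versus sampling it in Equation \ref{eq-2} yields the same gradient once the outer data/policy expectation is taken, and being careful that the cycle-consistency reweighting of the E-step posterior (the $p(y^G\mid\cdot)$ factor in Equation (2) of the main text) is precisely the object that becomes the advantage weight rather than an extra, unaccounted term. A secondary subtlety worth a remark is that the hinted proposal $\tilde p(z\mid x,y^1,y^G)$ differs from the prior $\pi_\theta(z\mid x,y^1)$; I would address this by appealing to the rejection-sampling correctness established in Appendix \ref{appendix:rejection-sampling-rule}, which guarantees the accepted samples are distributed as the true posterior, so the equivalence is stated at the level of that exact posterior rather than the raw proposal.
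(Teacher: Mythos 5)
Your proposal follows essentially the same route as the paper's proof: expand the posterior in Equation \ref{eq-1} via Bayes' rule into a validator-weighted expectation under the on-policy prior, substitute the indicator validator so the normalizer $p(y^G\mid x,y^1)$ (the paper's $Z(x,y_1)$) survives only as a positive proportionality constant, and then pass from the binary indicator to the advantage $A(y_2,y_1;x,y^G)=f(y_2,y^G)-f(y_1,y^G)$ to recover the variance-reduced REINFORCE gradient of Equation \ref{eq-2}. Your added care about the $y^1$ conditioning-versus-sampling bookkeeping and about the hinted proposal $\tilde p(z\mid x,y^1,y^G)$ differing from the prior are points the paper's proof glosses over (it silently marginalizes $z$ and treats $y_1$ as on-policy), so these are welcome refinements rather than a different argument.
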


\begin{proof}
    Define the E-step posterior over ($y^2, z$), with parameters $\bar{\theta}$ (held constant in the M-step) as 
    \begin{equation}
        p_{\bar{\theta}}(y^2, z \mid x, y^1, y^G) \propto p(y^G \mid x, y^1, y^2) \cdot \pi_{\bar{\theta}}(z \mid  x, y^1, y^G) \cdot \pi_{\bar{\theta}}(y^2 \mid x, y^1, z)
    \end{equation}

    Consider the unnormalized validator $v(y^2 \mid x, y^1) \propto p(y^G \mid x, y^1, y^2)$. Then, by Bayes' theorem:
    \begin{equation}
         p_{\bar{\theta}}(y^2, z \mid x, y^1, y^G) = \frac{v(y^2 \mid x, y^1) \cdot \pi_{\bar{\theta}}(z \mid x, y^1, y^G) \cdot \pi_{\bar{\theta}}(y^2 \mid x, y^1, z)}{Z_{\bar{\theta}}(x,y^1, y^G)}
    \end{equation}

    where $Z_{\bar{\theta}}(x,y^1, y^G) = \sum\limits_{z,y^2} v(y^2 \mid x, y^1) \cdot \pi_{\bar{\theta}}(z \mid x, y^1, y^G) \cdot \pi_{\bar{\theta}}(y^2 \mid x, y^1, z)$ only depends on $\bar{\theta}$, not on $\theta$.

    The M-step maximizes the expected log-likelihood under the fixed E-step posterior, that is:
    \begin{equation}
        \mathcal{L}(\theta) = \mathbb{E}_{(y^2, z) \sim p_{\bar{\theta}}(\cdot \mid x, y^1, y^G)}[\log \pi_\theta(z \mid x, y^1, y^G) + \log \pi_{\theta} (y^2 \mid x, y^1, z)]
    \end{equation}

    Therefore, the M-step gradient is:
    \begin{equation}
        \nabla_{\theta} \hspace{0.5mm} \mathcal{L}(\theta) = \mathbb{E}_{(y^2, z) \sim p_{\bar{\theta}}}[\nabla_{\theta} \log \pi_{\theta}(z \mid x, y^1, y^G) + \nabla_{\theta} \log \pi_{\theta}(y^2 \mid x, y^1, z)]
    \end{equation}

    Next, we consider the assignment of the EM validator to be in terms of the comparison between initial response $y^1$ and refined response $y^2$ with respect to $y^G$ over the similarity function $f$. That is, take $v(y^2\mid x, y^1) = \textbf{1}(f(y^2,y^G) > f(y^1,y^G)$; such that we only accept refinements that beat $y^1$ under $f(\cdot, y^G)$. This reflects the STaPLe algorithm's accept/reject criterion. Substituting for $p_{\bar{\theta}}$ yields:
    \begin{equation}
    \nabla_{\theta} \hspace{0.5mm}\mathcal{L} (\theta) \propto \mathop{\mathbb{E}}_{y^2, z \sim \pi_{\bar{\theta}}}[\textbf{1}(f(y^2,y^G) > f(y^1,y^G))[\nabla_{\theta}\log\pi_{\theta}(z \mid x, y^1, y^G) + \nabla_{\theta} \log \pi_{\theta}(y^2\mid x,y^1, z)]]
    \end{equation}

    with the normalization constant $\frac{1}{Z_{\bar{\theta}}(x,y^1)}$ being constant with respect to $\theta$ in the M-step.

     This is consistent with the score-function gradient defined in Equation \ref{eq-2}, under the indicator definition. Note that under practical variance reduction, all credit is attributed to the final refinement, with none attributed to the principle. As such, we can rewrite this gradient as:
    \begin{equation}
        \nabla_{\theta} \hspace{0.5mm} \mathcal{L}(\theta) \propto \mathop{\mathbb{E}}_{y^2, z \sim \pi_{\bar{\theta}}}[\textbf{1}(f(y^2,y^G) > f(y^1,y^G))[ \nabla_{\theta} \log \pi_{\theta}(y^2\mid x,y^1, z)]]
    \end{equation}

    As a corollary, to generalize to real-valued rewards such as Rouge-L, reward models, or LLM-as-a-judge scores, we instead replace this hard indicator with an advantage function $A(y^2,y^1;x,y^G) = f(y^2,y^G) -f(y^1,y^G)$. Concretely, this implies taking the validator $v(y^2 \mid x, y^1) \propto e^{A(y^2, y^1; x, y^G)}$.

    In the canonical REINFORCE self-play setting \citep{dayan1990reinforcement,sutton1984temporal}, the reward $R(\tau)$ over the trajectory $\tau$ is often replaced by an advantage to reduce the variance of the Monte Carlo estimate, introducing $A(\tau) = R(\tau) - b$ for a comparison $b$. This yields a gradient $\nabla_{\theta}J(\theta) = \mathop{\mathbb{E}}[A(\tau)\nabla_{\theta}\log[\pi_{\theta}(\tau)]]$ in practice. In our setting, we are simply taking the score of the initial response $f(y^1,y^G)$ to be the comparison. 

    Performing this substitution in the current form of the EM gradient yields:
    \begin{equation}
    \nabla_{\theta}\hspace{0.5mm}\mathcal{L}(\theta) \propto \mathop{\mathbb{E}}_{y^2,z \sim \pi_{\theta}}[A(y^2,y^1;x,y^G)\nabla_{\theta}\log \pi_{\theta}(y^2\mid x,y^1, z)]]
    \end{equation}

    This recovers Equation \ref{eq-2}, the self-play REINFORCE gradient, concluding the proof.

\end{proof}

\newpage
\section{Complete Table: Self-Improvement over Multiple Iterations}\label{complete_table}

In this section, we include the complete tables over four iterations of the STaPLe algorithm, to demonstrate the model's progression of self-improvement. As shown in Table \ref{table:multiple-iters}, STaPLe outpaces the STaR baseline by a substantial margin throughout the execution of both algorithms, even in spite of the improvements of Llama-8B and Granite-8B saturating by the end of iteration 3. While both algorithms have fairly similar MT-Bench Turn-1 scores by iteration 4, the Turn-2 score is substantially higher (average of +0.22) for STaPLe. We observe similar general trends for the models in AlpacaEval win-rate and Prometheus-based IFEval principle-following win-rate, as well. 

\begin{table}[h]
\footnotescript
  \caption{Self-improvement over four iterations of the STaPLe algorithm, compared against the STaR baseline (SFT without the principle operating as a latent CoT between the initial and refined attempts). Note that the SFT sample counts for iterations 2-4 differ as the principles are discovered by different models -- the STaR Iter 1 and STaPLe Iter 1 models, respectively. Numbers in parentheses denote training set size, based on the number of samples which successfully refined. }
  \label{table:multiple-iters}
  \centering
  \begin{tabular}{ccccccl}
    \toprule
    Model    & MT-Bench (avg)  & MT-Bench (T1) & MT-Bench (T2) & AlpacaEval & IFEval WR \\
    \midrule
    \textbf{Llama-3.1-8B-Instruct} & & & & & \\
    \midrule
    Initial Policy & 7.46 & 8.09 & 6.83 & 26.9 & -- \\
    \midrule
    STaR Iter 1 (28.2k)  & 7.43  & 8.04  & 6.81 & 29.1 & 55.5\%   \\
    STaPLe Iter 1 (28.2k) & 7.66 & 8.15  & 7.16 & 32.2 &  65.6\%   \\
    \midrule
    STaR Iter 2 (6.0k)  & 7.47 & 8.08 &  6.86 & 30.6 & 57.7\% \\
    STaPLe Iter 2 (6.1k) & 7.74 & 8.19 & 7.29 & 34.4 & 66.2\% \\
    \midrule
    STaR Iter 3 (6.1k)  & 7.51 & 8.10 & 6.91 & 31.5 & 61.0\% \\
    STaPLe Iter 3 (6.3k) & \textbf{7.74} & \textbf{8.16} & \textbf{7.31} & \textbf{35.6} & 68.8\%  \\
    \midrule
    STaR Iter 4 (6.3k)  & 7.56 & 8.11 & 7.00 & 31.8 & 62.3\% \\
    STaPLe Iter 4 (6.6k) & 7.71 & 8.13 & 7.30 & 33.4 &  \textbf{68.9\%} \\
    \midrule
    \midrule
    \textbf{Granite-3.1-8B-Instruct} & & & & & \\
    \midrule
    Initial Policy & 7.83 & 8.59 & 7.08 & 30.2 & -- \\
    \midrule
    STaR Iter 1 (24.1k) & 7.83  & 8.61  & 7.05 & 33.0 & 57.3\%  \\
    STaPLe Iter 1 (24.1k) & 7.99 & 8.69  & 7.29 & 36.7 &   65.1\% \\
    \midrule
    STaR Iter 2 (5.4k)  & 7.86 & 8.63 & 7.10 & 34.7 & 59.5\% \\
    STaPLe Iter 2 (5.2k) & 8.04 & 8.74 & 7.34 & 38.9 & 65.2\% \\
    \midrule
    STaR Iter 3 (5.9k)  & 7.92 & 8.66 & 7.18 & 35.4 & 61.9\% \\
    STaPLe Iter 3 (5.9k) & \textbf{8.06} & \textbf{8.75} & 7.38 & \textbf{39.8} & \textbf{71.6\%} \\
    \midrule
    STaR Iter 4 (6.2k)   & 7.96 & 8.68 & 7.25 & 35.6 & 62.1\% \\
    STaPLe Iter 4 (6.3k) & 8.04 & 8.69 & \textbf{7.41} & 38.4 & 67.6\%  \\
    \midrule
    \midrule
    \textbf{Qwen2.5-7B-Instruct} & & & & & \\
    \midrule
    Initial Policy & 6.83 & 7.34 & 6.31 & 30.4 & -- \\
    \midrule
    STaR Iter 1 (30.9k) & 6.85  & 7.39  & 6.31 & 34.5 &  61.0\%  \\
    STaPLe Iter 1 (30.9k) & 7.03 & 7.48  & 6.59 & 37.3 & 68.2\%     \\
    \midrule
    STaR Iter 2 (6.5k)  & 6.98 & 7.45& 6.51 & 36.9 & 63.0\% \\
    STaPLe Iter 2 (6.5k) & 7.14 & 7.55 & 6.73 & 39.4 & 66.2\% \\
    \midrule
    STaR Iter 3 (7.1k)   & 7.08 & 7.58 & 6.59 & 37.6 & 66.4\% \\
    STaPLe Iter 3 (7.0k) & 7.20 & 7.63 & 6.78 & 39.8 & 72.5\%  \\
    \midrule
    STaR Iter 4 (7.1k)  & 7.14 & 7.63 & 6.66 & 37.8 & 68.4\% \\
    STaPLe Iter 4 (7.1k) & \textbf{7.24} & \textbf{7.64} & \textbf{6.85} & \textbf{40.2} & \textbf{73.4\%} \\
    \bottomrule
  \end{tabular}
\end{table}

\newpage
\section{Prometheus Win-rates on MT-Bench and AlpacaEval}\label{appendix:prometheus-winrates}

Given that in Section \ref{sec:results}, we have sampled responses with intrinsic principle-conditioned self-correction behavior from the language model for MT-Bench and AlpacaEval, we can further study the quality of the Prometheus-8x7B-v2.0 model in producing judgements over a fine-grained rubric. We specifically would like to understand whether the model's responses -- which, as per Tables \ref{table:compare-against-baselines} and \ref{table:multiple-iters}, achieve improvements in score -- actually reflect the principles they invoke. 

This method corresponds to the IFEval principle-following win-rates reported in Section \ref{sec:results}. As such, note that the AlpacaEval win-rate in this section differs from the standard AlpacaEval scoring -- this is the \textbf{\textit{percentage of AlpacaEval (correspondingly MT-Bench) samples on which Prometheus-v2.0 chose the refined response over the base policy's generation, with regards to the principle-following rubric.}} Recall that our STaR baseline also produces an intrinsic self-correction, but without the principle, so we use the principle invoked by the STaPLe model in the Prometheus judge rubric.

\begin{table}[h]
\footnotescript
  \caption{Analysis of the Prometheus-8x7B-v2.0 model's judgements on the self-correction responses of the STaPLe model against the STaR baseline. The baseline win-rate against the base policy is 50\%.}
  \label{table:prometheus-mt-bench-alpacaeval}
  \centering
  \begin{tabular}{ccccccl}
    \toprule
    Model    & MT-Bench Prometheus Win-rate & AlpacaEval Prometheus Win-rate \\
    \midrule
    \textbf{Llama-3.1-8B-Instruct} & &  \\
    \midrule
    STaR Iter 1 (28.2k)  & 56.3\% &  54.0\% \\
    STaPLe Iter 1 (28.2k) & 62.5\% &   62.4\% \\
    \midrule
    STaR Iter 2 (6.0k)  & 61.3\% & 58.6\% \\
    STaPLe Iter 2 (6.1k) & 67.5\% & 65.0\% \\
    \midrule
    STaR Iter 3 (6.1k)  & 62.5\% & 61.1\% \\
    STaPLe Iter 3 (6.3k) & \textbf{71.3\% }& \textbf{68.7\%} \\
    \midrule
    STaR Iter 4 (6.3k)  & 66.3\% & 62.4\% \\
    STaPLe Iter 4 (6.6k) & 70.0\% & 64.6\% \\
    \midrule
    \midrule
    \textbf{Granite-3.1-8B-Instruct} & & \\
    \midrule
    STaR Iter 1 (24.1k) & 57.5\% & 56.1\% \\
    STaPLe Iter 1 (24.1k) & 63.8\% & 62.1\% \\
    \midrule
    STaR Iter 2 (5.4k)  & 60.0\% & 60.1\% \\
    STaPLe Iter 2 (5.2k) & 68.8\% & 65.6\% \\
    \midrule
    STaR Iter 3 (5.9k)  & 63.8\% & 62.2\% \\
    STaPLe Iter 3 (5.9k) & 72.5\% & \textbf{69.3\%} \\
    \midrule
    STaR Iter 4 (6.2k)   & 66.3\% & 63.0\% \\
    STaPLe Iter 4 (6.3k) & \textbf{73.8\%} & 68.7\% \\
    \midrule
    \midrule
    \textbf{Qwen2.5-7B-Instruct} & & \\
    \midrule
    STaR Iter 1 (30.9k) & 60.0\% & 58.6\% \\
    STaPLe Iter 1 (30.9k) & 67.5\% &   65.2\%  \\
    \midrule
    STaR Iter 2 (6.5k)  & 63.8\% & 63.1\% \\
    STaPLe Iter 2 (6.5k) & 71.3\% & 68.8\% \\
    \midrule
    STaR Iter 3 (7.1k)   & 67.5\% & 65.3\% \\
    STaPLe Iter 3 (7.0k) & 75.0\% & 70.7\% \\
    \midrule
    STaR Iter 4 (7.1k)  & 71.3\% & 65.6\% \\
    STaPLe Iter 4 (7.1k) & \textbf{76.3\%} & \textbf{71.3\%} \\
    \bottomrule
  \end{tabular}
\end{table}

Both algorithms yield gains over the base policy in win-rate, with STaPLe outperforming STaR across all iterations. 
Interestingly, we find that on MT-Bench, the STaR baseline continues to increase by a sizable amount 
 (2.5-3.8 pts) in iteration 4, unlike the true MT-Bench score and the other benchmarks as reported in Table \ref{table:multiple-iters}. By contrast, training over principles in the unconstrained STaPLe yields a smaller gain (for Granite-8B and Qwen-7B, and a slight drop for Llama-8B), although STaPLe still outperforms STaR by +7.5-8.8\% in iteration 3 and +3.7\%-7.5\% in iteration 4. However, Granite-8B does appear to improve in MT-Bench win-rate in iteration 4, despite the average MT-Bench score dropping (as can be witnessed in Table \ref{table:multiple-iters}. However, given the small sample size of the dataset (80 samples), this could be a product of noise, unlike the larger datasets like IFEval (541 samples) and AlpacaEval (805 samples). On AlpacaEval, we witness a similar trend, albeit more consistent with the AlpacaEval scores reported in Table \ref{table:multiple-iters}.

\section{Stepwise Win-rate Analysis}\label{appendix:stepwise-winrates}

Recall that the Prometheus win-rates that have been reported thus far are a comparison against generations from each model's initial policy (instruct model) $\pi_0$. However, to confirm that the model's generations continue to improve in principle-following quality over the iterations, we compare the iteration $t$ model's generations against iteration $t\hspace{-0.5mm}-\hspace{-0.5mm}1$ in the Prometheus judgement setup. Given our primary focus in Tables \ref{table:compare-against-baselines} and \ref{table:multiple-iters} was on IFEval, we recompute these win-rates against the initial response in trained STaPLe model's own generated self-correction trajectories. In iteration 1, the comparison is done against the base policy, and thus the win-rates reported are the same as in the aforementioned tables. 

\begin{table}[h]
\footnotescript
  \caption{Stepwise win-rates over the iterations of the unconstrained STaPLe algorithm with the Prometheus-v2.0 judge. Instead of comparing against the initial (instruction-tuned) policy for all iterations, this judge compares against the responses sampled from the previous iteration's policy. }
  \label{table:prometheus-stepwise}
  \centering
  \begin{tabular}{ccccccl}
    \toprule
    Model    & IFEval Prometheus Win-rate  \\
    \midrule
    \textbf{Llama-3.1-8B-Instruct} &   \\
    \midrule
    STaPLe Iter 1 (28.2k) & 65.6\%   \\
    \midrule
    STaPLe Iter 2 (6.1k) & 58.2\%  \\
    \midrule
    STaPLe Iter 3 (6.3k) & 54.3\%  \\
    \midrule
    STaPLe Iter 4 (6.6k) & 49.4\% \\
    \midrule
    \midrule
    \textbf{Granite-3.1-8B-Instruct} &  \\
    \midrule
    STaPLe Iter 1 (24.1k) & 65.1\% \\
    \midrule
    STaPLe Iter 2 (5.2k) & 62.3\% \\
    \midrule
    STaPLe Iter 3 (5.9k) & 58.0\% \\
    \midrule
    STaPLe Iter 4 (6.3k) & 47.9\%  \\
    \midrule
    \midrule
    \textbf{Qwen2.5-7B-Instruct} &  \\
    \midrule
    STaPLe Iter 1 (30.9k) &  68.2\%    \\
    \midrule
    STaPLe Iter 2 (6.5k) & 61.2\% \\
    \midrule
    STaPLe Iter 3 (7.0k) & 63.4\%  \\
    \midrule
    STaPLe Iter 4 (7.1k) & 60.8\% \\
    \bottomrule
  \end{tabular}
\end{table}

\begin{figure}[h]
    \centering
    \includegraphics[width=0.7\linewidth]{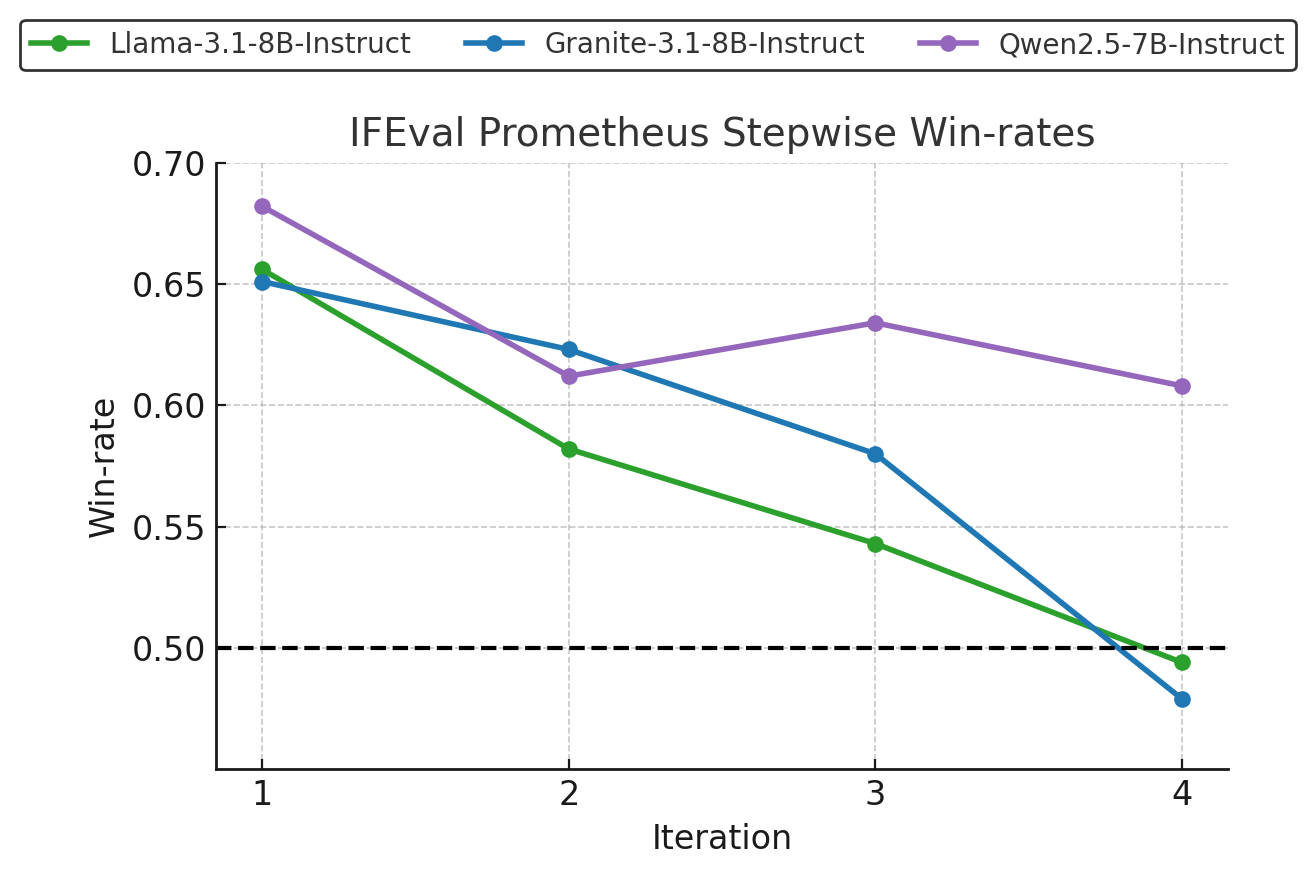}
    \caption{Visualization of Table \ref{table:prometheus-stepwise}, comparing against the 50\% baseline. While the win-rate exceeds 50\%, the model continues to self-improve.}
    \label{fig:ifeval-prometheus-stepwise}
\end{figure}

Note that a win-rate of 50\% indicates that responses generated under $\pi_{t}$ were equally preferred to responses generated under $\pi_{t-1}$. As such, the win-rates remaining above 50\% by a sizble margin is further evidence of the model's self-improvement. These stepwise win-rates are also a useful signal in behaving like an "elbow" method, to determine when to terminate the STaPLe algorithm. For instance, observing that the win-rates drop below 50\% for the Llama-8B and Granite-8B models in iteration 4 suggests that their responses degraded compared to their prior iteration's responses (albeit, Llama-8B is fairly marginally below 50\%). On the other hand, Qwen's win-rates remain above 60\% throughout, suggesting that there perhaps is potential to continue its self-improvement for additional iterations. We plot this progression in Figure \ref{fig:ifeval-prometheus-stepwise} for a visual representation of this selection process. 

\newpage
\section{Intrinsic Self-Correction}\label{intrinsic-self-correction}

Given that the trained STaPLe model performs intrinsic self-correction -- given a prompt, it produces an initial response, invokes a principle to improve it, and improves the response, without an external stimulus or re-prompting -- we can analyze the advantage between the model's initial and final responses. We do this using the Prometheus-v2.0 judge, on IFEval prompts, to give a binary preference between the initial response and final response on principle-following, using the same judge prompt as in other experiments in Tables \ref{table:compare-against-baselines}-\ref{table:prometheus-stepwise}. The results are found in Table \ref{table:self-correction-winrate}. We find that the win-rates do improve over the iterations, reinforcing the claim that STaPLe-trained models learn intrinsic self-correction behavior. These win-rates are also consistent with our prior findings that the Llama-8B and Granite-8B models degrade in iteration 4, while Qwen-7B continues to improve. 

\begin{table}[h]
\footnotescript
  \caption{Prometheus-v2.0 win-rate in comparing the model-generated initial and refined responses, on the basis of which response better reflects the principle invoked for unconstrained STaPLe.}
  \label{table:self-correction-winrate}
  \centering
  \begin{tabular}{ccccccl}
    \toprule
    Model    & IFEval Prometheus Win-rate  \\
    \midrule
    \textbf{Llama-3.1-8B-Instruct} & &  \\
    \midrule
    STaPLe Iter 1 (28.2k) & 72.6\% &   \\
    \midrule
    STaPLe Iter 2 (6.1k) & 74.3\% & \\
    \midrule
    STaPLe Iter 3 (6.3k) & \textbf{75.0\%} &  \\
    \midrule
    STaPLe Iter 4 (6.6k) & 73.4\% & \\
    \midrule
    \midrule
    \textbf{Granite-3.1-8B-Instruct} & & \\
    \midrule
    STaPLe Iter 1 (24.1k) & 76.5\% & \\
    \midrule
    STaPLe Iter 2 (5.2k) & 77.1\% & \\
    \midrule
    STaPLe Iter 3 (5.9k) & \textbf{83.2\%} & \\
    \midrule
    STaPLe Iter 4 (6.3k) & 77.8\% &  \\
    \midrule
    \midrule
    \textbf{Qwen2.5-7B-Instruct} & & \\
    \midrule
    STaPLe Iter 1 (30.9k) & 75.8\% &     \\
    \midrule
    STaPLe Iter 2 (6.5k) & 78.0\% & \\
    \midrule
    STaPLe Iter 3 (7.0k) & 79.7\% &  \\
    \midrule
    STaPLe Iter 4 (7.1k) & \textbf{82.1\%} & \\
    \bottomrule
  \end{tabular}
\end{table}

\subsection{Refinement Rate Analysis}

\begin{figure}[h]
\caption{STaPLe refinement rates across 4 iterations for unconstrained STaPLe algorithm. This represents the fraction of samples in the mining corpus on which at least one principle-conditioned refinement attempt improved over the initial response. }
  \label{fig:STaPLe-refinement-rates}
  \centering
  \includegraphics[width=0.7\linewidth]{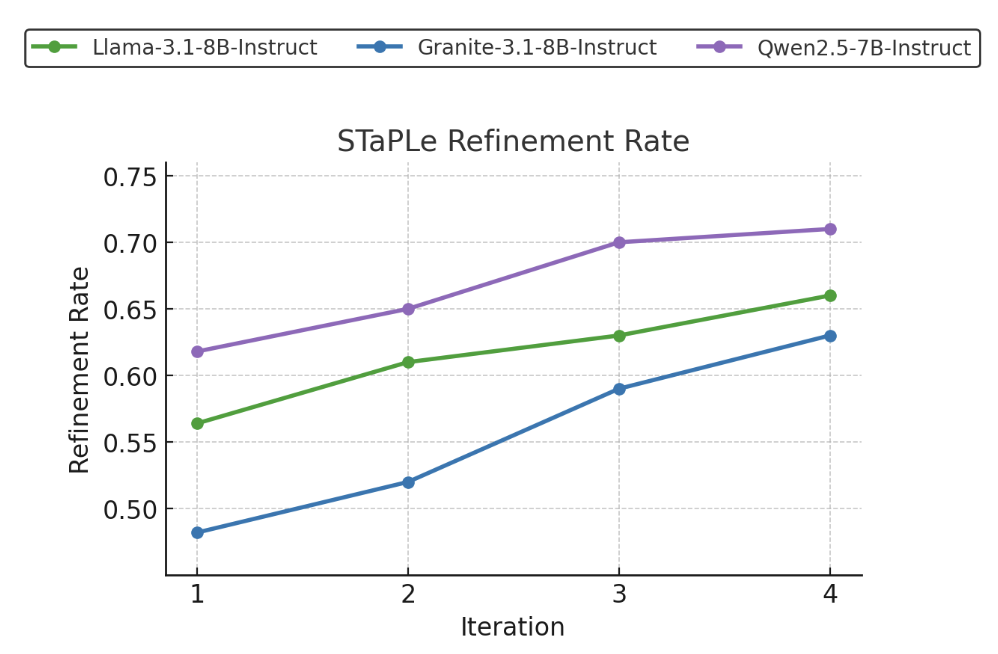}
\end{figure}

\begin{figure}[h]
    \centering
    \label{fig:constrained-STaPLe-refinement-rates}
    \caption{STaPLe refinement rates across 4 iterations for constrained STaPLe algorithm. This represents the fraction of samples in the mining corpus on which at least one principle-conditioned refinement attempt improved over the initial response. }\includegraphics[width=0.7\linewidth]{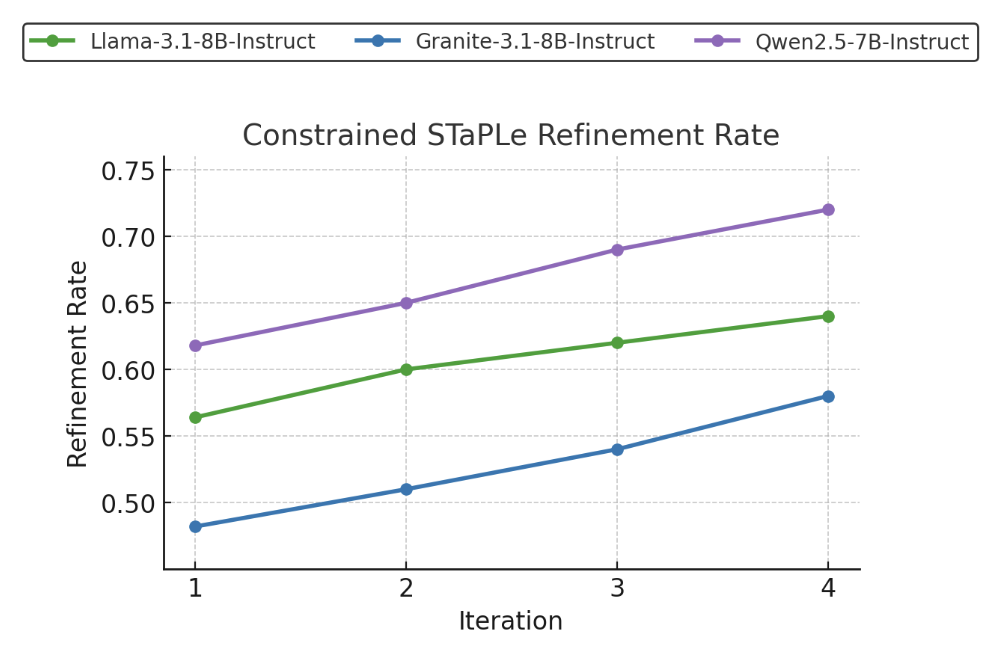}
\end{figure}

We compare the refinement rates between the unconstrained and constrained versions of STaPLe in Figures \ref{fig:STaPLe-refinement-rates} and \ref{fig:constrained-STaPLe-refinement-rates}. We observe a similar trend, where Qwen-7B starts with th highest rate (above 0.61) and remains the highest throughout. The refinement rates for Llama-8B and Granite-8B gain similarly for both versions, although the refinement rates are lower by iteration 4 in the constrained version. In the left plot, the Granite refinement rate spikes during iteration 3 principle discovery (the E-step), which we do not see in the constrained version. 

\subsection{Precision Analysis in Self-Refinement}\label{appendix:precision-no-copying}

As weaker language models have been demonstrated to be less successful at inference-time self-refinement without targeted, pre-specified dimensions of refinement \citep{madaan, ramji2024selfrefinementlanguagemodelsexternal}, we filter all refinements that did not improve over the initial response. To validate that although principles are proposed given $y^G$, the gold response itself is not 
"leaked" to the refinement trajectory, we compute the Rouge-L precision, and report the number of samples that exceed 0.9. 

\begin{table}[h]
\footnotescript
  \caption{Analyzing the number of samples with a high Rouge-L precision score. }
  \label{table:rouge-l-prec}
  \centering
  \begin{tabular}{ccccccl}
    \toprule
    Model    & Total Samples & \# Samples with Rouge-L Precision > 0.9 & Percentage  \\
    \midrule
    \textbf{Llama-3.1-8B-Instruct} & &  \\
    \midrule
    STaPLe Iter 1  & 28.2k & 91 & 0.32\% \\
    \midrule
    STaPLe Iter 2  & 6.1k & 35 & 0.57\%\\
    \midrule
    STaPLe Iter 3  & 6.3k & 31 & 0.49\%  \\
    \midrule
    STaPLe Iter 4  & 6.6k & 41 & 0.62\% \\
    \midrule
    \midrule
    \textbf{Granite-3.1-8B-Instruct} & & \\
    \midrule
    STaPLe Iter 1  & 24.1k & 157 & 0.65\% \\
    \midrule
    STaPLe Iter 2  & 5.2k & 32 & 0.61\% \\
    \midrule
    STaPLe Iter 3  & 5.9k & 56 & 0.94\% \\
    \midrule
    STaPLe Iter 4  & 6.3k & 87 & 1.38\% \\
    \midrule
    \midrule
    \textbf{Qwen2.5-7B-Instruct} & & \\
    \midrule
    STaPLe Iter 1 & 30.9k &  176  & 0.57\% \\
    \midrule
    STaPLe Iter 2  & 6.5k & 53 & 0.82\% \\
    \midrule
    STaPLe Iter 3  & 7.0k & 88 & 1.25\% \\
    \midrule
    STaPLe Iter 4  & 7.1k & 82 & 1.16\% \\
    \bottomrule
  \end{tabular}
\end{table}

The results can be found in Table \ref{table:rouge-l-prec} above; we find that the percentage of samples with a high precision does not exceed 1.5\%, a negligibly small figure. Since the total number of samples (the denominator) consists of the instances where the refinement retained maximally improved in Rouge F1, if the precision is still relatively low, it must be that the recall improved. That is, while the rate of copying does not grow much, the generated responses better cover the gold -- this is necessary for better reflecting the core elements of the gold response for the model to imitate with its on-policy generations. In works such as STaR \citep{zelikman}, a chain-of-thought is induced that reconstructs the original gold answer — our work achieves a similar purpose, while instead learning a distribution of on-policy generations similar to the gold, rather than overfitting to the gold.

\section{Model-Generated Constitutions}\label{appendix:constitutions}

For each model, we include the constitution generated, and a histogram of the densities of each element taught during the final iteration of training. This histogram denotes the number of samples in the cluster for which each principle serves as a representative. 

\subsection{Granite-3.1-8B-Instruct-Generated Constitution}

\begin{constitutionbox}[colframe=black,boxrule=1pt]
    1. Clarity and Conciseness \\
    2. Empathy and Compassion \\
    3. Contextualization \\
    4. Comprehensive Approach \\
    5. Balanced Information and Perspective \\
    6. Personalization and Empowerment \\
    7. Tone and Communication \\
    8. Ethical and Legal Considerations \\
    9. Precision and Specificity \\
    10. Accuracy and Verification \\
    11. Privacy and Consent \\
    12. Alternative Solutions \\
    13. Recipe Variety \\
    14. Detailed Support \\
    15. Sensory Imagery and Metaphors \\
    16. Step-by-Step Instructions \\
    17. Urgency and Action Steps \\
    18. Consequences and Prevention \\
    19. Holistic Approach \\ 
    20. Encouragement and Engagement \\ 
    21. Age and Developmental Appropriateness \\
    22. Incorporate More Narrative Elements \\
    23. Emphasize the importance of understanding the pet's natural habitat and behavior \\
    24. Cultural Sensitivity \\ 
    25. Emphasize Objectivity and Respectfulness \\ 
    26. Emphasize Honesty and Integrity \\ 
    27. Emphasize Safety and Compliance \\ 
    28. Emphasize Individual Diversity and Avoid Stereotypes \\
    29. Consider Individual Preferences \\ 
    30. Emphasize the Importance of Professional Medical Advice \\ 
    31. Emphasize Practical Strategies \\ 
    32. Emphasize the Individual Nature of Relationships \\ 
    33. Emphasize the value of the individual and their experiences \\ 
    34. Simplification and Clarity \\
    35. Provide Specific Examples 
\end{constitutionbox}

 \begin{figure}[h]
\caption{Breakdown of the Granite-3.1-8B-Instruct iteration 3 model-generated constitution in terms of the number of elements in each cluster. The label on the x-axis denotes the cluster representative element (medoid). The counts also denote the number of fine-tuning samples contained this principle in the augmented dataset $\widetilde{\mathcal{D}}$, following  label replacement in the trajectories. We use ellipses for the sake of readability.}

  \label{fig:granite-constitution-histogram}
  \centering
  \includegraphics[width=0.93\linewidth]{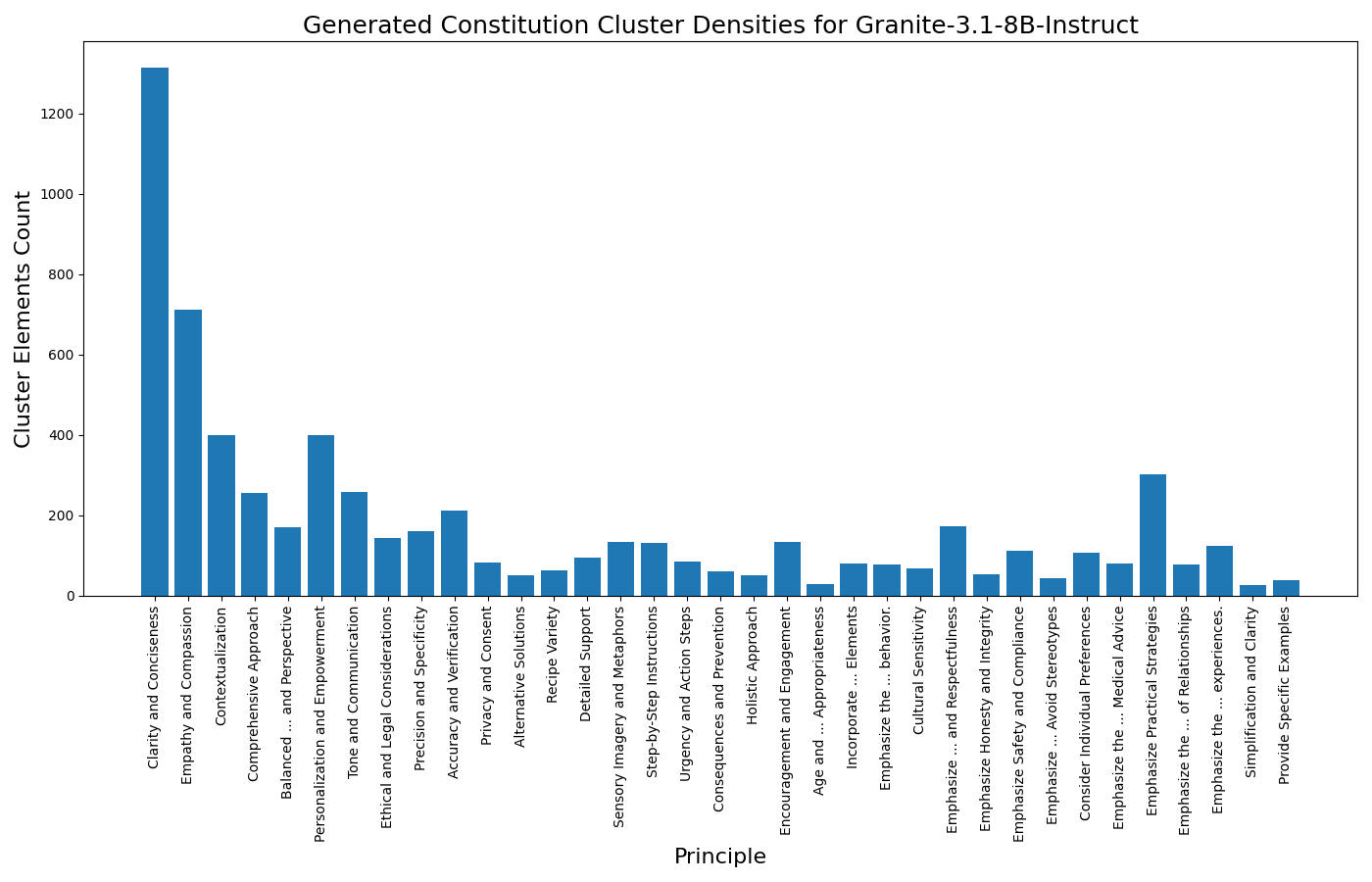}
\end{figure}

In particular, we observe that the "Clarity and Conciseness" and "Empathy and Compassion" principles are the most emphasized, likely as a result of mining corpus domains including summarization (TL;DR) and harmlessness (HH-RLHF). The phrase "Emphasize ..." is repeated fairly often, albeit in different contexts. This reflects the model's stylistic preferences for principles that aid it in self-correcting, one of the key reasons for using on-policy-generated principles in the STaPLe algorithm, rather than introducing "supervision" from a stronger model in an off-policy fashion. We also repeat the Granite-generated constitution in Figure \ref{fig:granite-constitution-histogram}, to ease direct comparison of the constitutions across models here in the Appendix. 

\subsection{Llama-3.1-8B-Instruct-Generated Constitution}

\begin{constitutionbox}[colframe=black,boxrule=1pt]
1. Directness and Assertiveness \\
2. Conciseness and Clarity \\
3. Empathy and Emotional Validation \\
4. Structure and Organization \\
5. Contextualization and Relevance \\
6. Transparency and Honesty \\
7. Avoiding Harm and Sensitivity \\
8. Specificity and Completeness \\
9. Avoiding Assumptions \\
10. Challenge Assumptions \\
11. Be More Direct and Clear in Addressing Concerns \\
12. Acknowledge and Address Previous Misconceptions Clearly \\
13. Simplicity and Gradual Complexity \\
14. Be more assertive in responses to known information \\
15. Simplify and Focus on the Relevant Information \\
16. Redirect to a more acceptable alternative \\
17. Avoid Technical Jargon and Focus on Clear Explanation \\
18. Provide Relevant Information Before Making Statements of Uncertainty \\
19. Empathize with the emotional state of the reader and explicitly acknowledge their concerns before providing advice or solutions \\
20. Avoiding Ambiguity through Clarity \\
21. Empathic Validation and Clarification \\
22. Precision Over Generality \\
23. Minimize Unnecessary Information \\
24. Provide a plausible yet incomplete answer or an educated guess if direct information is not available \\
25. Provide Clarifying Context and Additional Information When Necessary \\
26. Engage in the conversation with a clear and well-defined tone \\
27. Avoid providing unnecessary information \\
28. Avoid unnecessary phrases and provide clear and direct information \\
29. Address the reader's concerns explicitly

\end{constitutionbox}

 \begin{figure}[h]
\caption{Analysis of the Llama-3.1-8B-Instruct iteration 4 constitution. We use ellipses for brevity, as in Figure \ref{fig:granite-constitution-histogram}, given the corresponding full principles may be found above.}

  \label{fig:llama-constitution-histogram}
  \centering
  \includegraphics[width=0.97\linewidth]{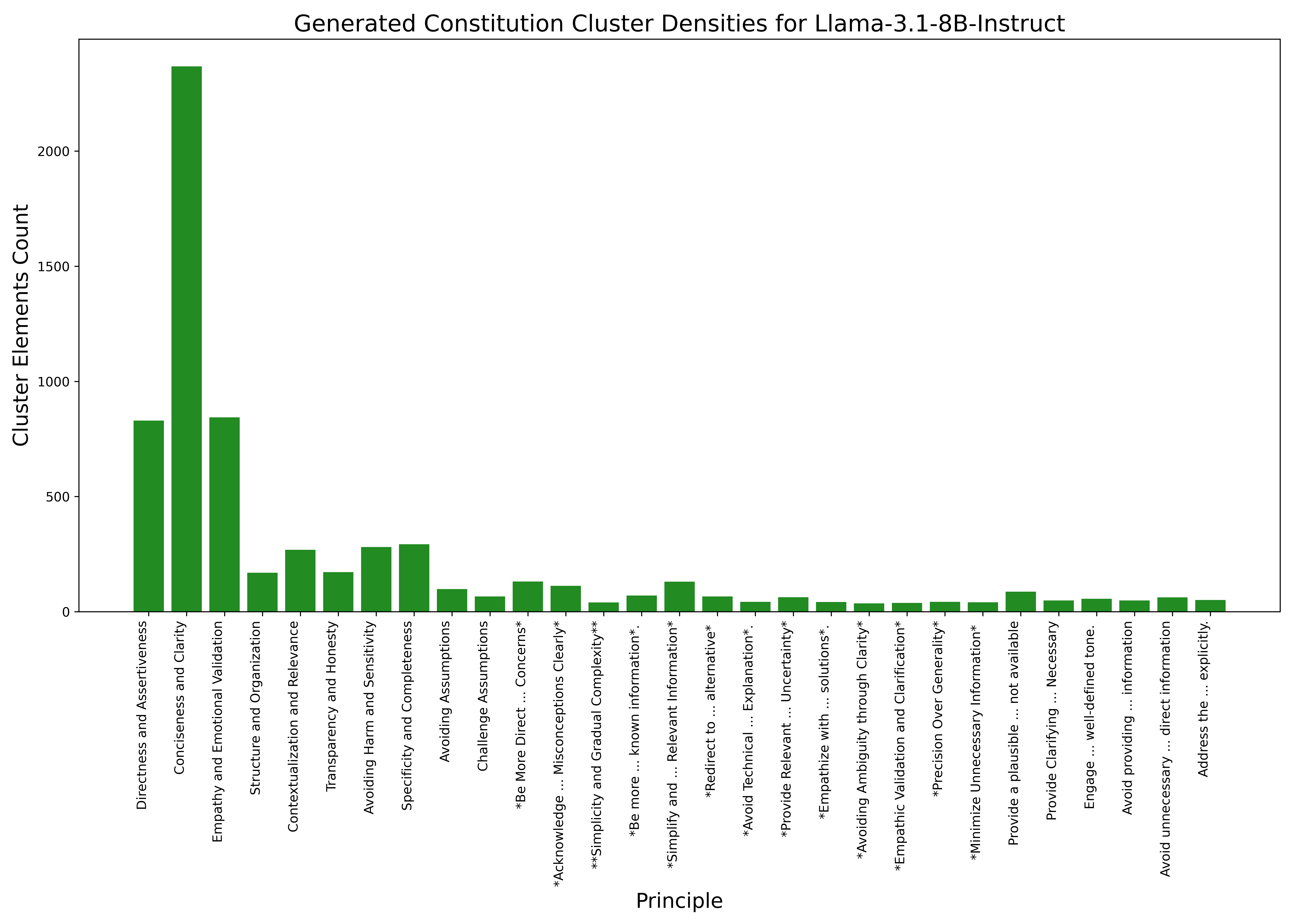}
\end{figure}

We observe that the at face value, the elements in the Llama-8B constituion are more "high-level", akin to some of the elements in works such as Constitutional AI and Dromedary \citep{bai2022constitutionalaiharmlessnessai, dromedary}. As with Granite, the majority of the mass is placed on elements with the premise of "Conciseness and Clarity" (simply swapping the order), as well as "Empathy and Emotional Validation), which is fairly similar to "Empathy and Compassion" from the Granite-8B constitution. A new element that appears fairly often ($\approx 800$ instances) is "Directness and Assertiveness". 

\newpage
\subsection{Qwen2.5-7B-Instruct-Generated Constitution}

\begin{constitutionbox}[colframe=black,boxrule=1pt]

1. Respect for Privacy and Consent \\
2. Clarity \& Specificity\\
3. Structured \& Problem-Solving Guidance\\ 
4. Comprehensive Consideration\\
5. Inclusivity and Respect in Communication\\
6. Specificity and Variety\\
7. Empathy \& Emotional Support\\
8. Clarity of Information and Timeline Accuracy\\
9. Clarify Assumptions and Context In Order to Provide Relevant Information.\\
10. Balanced Approach\\
11. Consistency in Structure and Clarity\\
12. Personal Growth Focused Guidance\\
13. Comprehensive and Structured Guidance\\
14. Ethics \& Safety\\
15. Emphasize Impact and Benefits.\\
16. Clarity and Specificity in Response\\
17. Consistent Structure and Flow\\
18. Clarity and Consistency in Evidence Presentation\\
19. Context \& Background\\
20. Respect and Communication\\
21. Focus on Accuracy and Relevance To the Requested Information\\
22. Clarify Distinctiveness of Breeds \\
23. Contextual Humor and Tone Consistency\\
24. Consistency in Genre Description\\
25. Clear and Direct Answering \\
26. Clarity and Specificity in Geographical Context\\
27. Clarity and Structure in Explanations\\
28. Clarify Ambiguity and Provide Specific Information \\
29. Holistic Relationship Approach\\
30. Emphasis on Comprehensive Support\\
31. Clarity and Structure in Information Presentation\\
32. Engagement and Specificity\\
33. Consistent Narrative Elements\\
34. Clarity and Specificity in Cultural Context\\
35. Clarity and Contextual Relevance\\
36. Clarity and Specificity in Context\\
37. Clarity of Comparison\\
38. Age-Appropriate Content and Supervision \\
39. Consistency in Tone and Style\\
40. Clarity and Specificity in Medical Conditions\\
41. Personalization and Specificity\\
42. Character Development and Consistency\\
43. Clarity and Accuracy in Historical Context\\
44. Clarity and Organization Through Structure\\
45. Clarity and Specificity in Categorization\\
46. Verify Information Accurately Before Providing \\
47. Clarity and Structure in Information Provision\\
48. Clarity and Relevance Principle \\
49. Clarity and Focus on Key Information
\end{constitutionbox}

 \begin{figure}[h]
\caption{Analysis of the Qwen2.5-7B-Instruct iteration 4 constitution. We use ellipses for brevity, as in Figures \ref{fig:granite-constitution-histogram} and \ref{fig:llama-constitution-histogram}, given the corresponding full principles may be found above. }

  \label{fig:qwen-constitution-histogram}
  \centering
  \includegraphics[width=\linewidth]{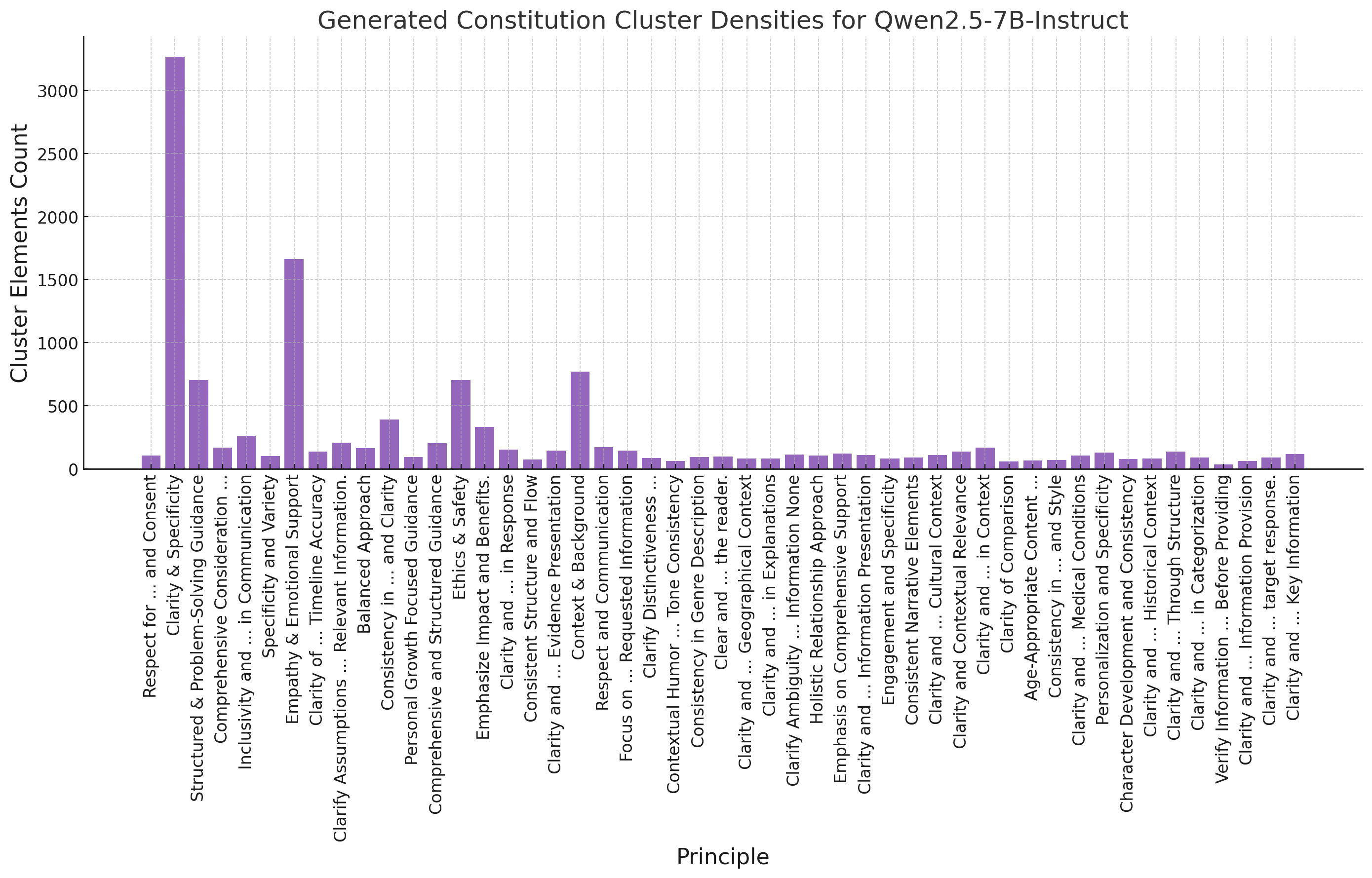}
\end{figure}

Qwen-7B appears to generate a larger constitution than the other models, despite discovering fewer new principles in subsequent iterations, as corroborated by Figure \ref{fig:principle-discovery-rates}. However, we find the constitution to be, at face value, not as diverse in its phrasing given many of the principles have "clarity" or "clarify". However, the contexts behind its usage varies quite drastically, e.g. "Clarity of Information and Timeline Accuracy" differs greatly from "Clarity and Specificity in Cultural Context"; this is akin to the phrase "Emphasize" as noted earlier in the Granite constitutions. As such, we still find this to be an appropriate constitution, especially when coupled with the gains that Qwen2.5-7B yields extending into the fourth iteration of STaPLe. 

\subsection{Number of Clusters over the Iterations of STaPLe Algorithm}\label{appendix:analysis-of-constitutions}

 \begin{figure}[h]
\caption{We plot the size of the constitutions generated under Constrained STaPLe with the medoids label replacement scheme.}

  \label{fig:size-of-constitution-over-iterations}
  \centering
  \includegraphics[width=0.8\linewidth]{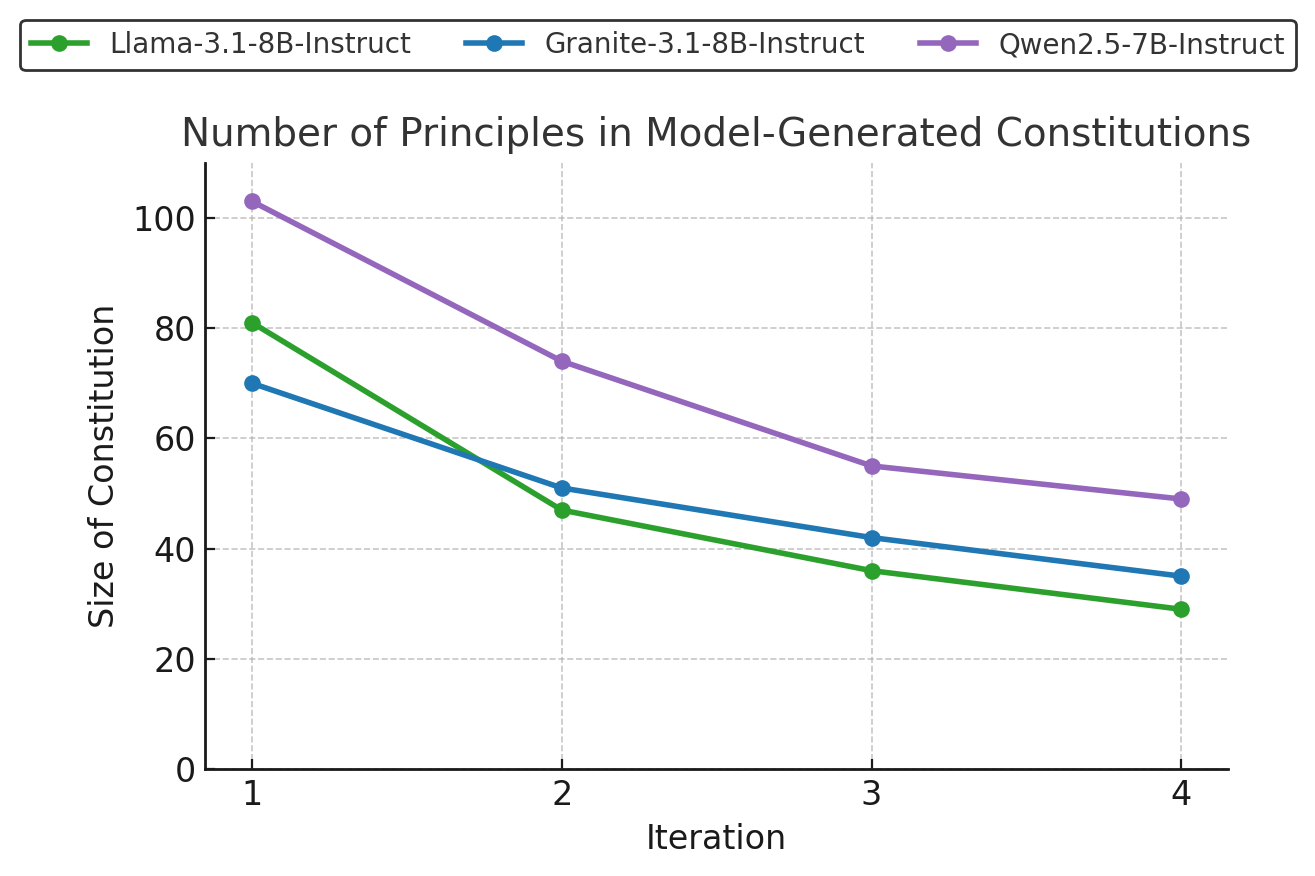}
\end{figure}

We observe that the size of the Qwen2.5-7B-generated constitution is larger throughout the iterations, although all models converge to a roughly fixed size, with the gap in size between the iterations 3 and 4 constitutions being minimal. The size of the constitution by iteration 4 is roughly around or more than 50\% smaller than the iteration 1 constitution, suggesting that the learned distribution is converging to a stable set (surrounding this constitution). This also corroborates with Figure \ref{fig:principle-discovery-rates}, where we show that the number of new principles discovered decreases over the iterations. 

\subsection{Interpreting Model-Generated Constitutions}

We include a breakdown of the constitution generated by Llama-3.1-8B-Instruct, as an example for how future taxonomy-oriented approaches might look to analyze the constitutions yielded by STaPLe. We categorize the principles into high-level domains: notably, “helpfulness”, “relevance”, “style”, “safety”, and “truthfulness and honesty”, and find that the categories are nearly evenly balanced.

\begin{itemize}
    \item 
Helpfulness: 6 principles (e.g. "Directness and Assertiveness")
\item Style: 5 principles (e.g. "Structure and Organization")
\item Relevance: 6 principles (e.g. "Minimize Unnecessary Information")
\item Safety: 6 principles (e.g. "Avoiding Harm and Sensitivity")
\item Truthfulness and Honesty: 6 principles ("Acknowledge and Address Previous Misconceptions Clearly")
\end{itemize}

\section{Inference-time Application of Discovered Constitutions}\label{appendix:inf-time}

One natural question is: do the discovered constitutions hold practical utility \textit{beyond interpretability}? We examine this by leveraging the discovered constitutions for \textit{extrinsic} self-refinement, guided by prior work demonstrating that providing concrete refinement dimensions aids performance \citep{madaan, ramji2024selfrefinementlanguagemodelsexternal}. 
However, rather than specifically selecting relevant principles for a domain a priori as in \citet{ramji2024selfrefinementlanguagemodelsexternal}, we provide the complete constitution to the model in-context. Given an initial generation and this constitution, we ask it to select a relevant principle to improve its generation, and to use it to perform refinement. 

The results in Table \ref{table:inf-app} appear to follow a clear and encouraging trend: using better, on-policy-generated principles through STaPLe yields consistent gains over Self-Refine, but lags behind iterative training. Furthermore, the magnitude of gains is consistent with the strength of refinement capabilities observed prior: Llama-3.1-8B-Instruct is less proficient at self-refinement, so even providing seemingly better principles still yields modest returns. By contrast, Granite-3.1-8B-Instruct and Qwen2.5-7B-Instruct extract more substantial gains, concordant with Self-Refine outperforming the initial policy. 

\begin{table}[h]
\footnotescript
  \caption{Results of applying the final iteration constitutions along with the initial policy (the LM out-of-the-box) for \textit{extrinsic}, rather than \textit{intrinsic} self-refinement. STaPLe results reported below are the \textit{unconstrained} version of the algorithm.}
  \label{table:inf-app}
  \centering
  \begin{tabular}{ccccccl}
    \toprule
    Model    & AlpacaEval  \\
    \midrule
    \textbf{Llama-3.1-8B-Instruct} &   \\
    \midrule
    Initial Policy & 26.9 \\
    Self-Refine & 26.1 \\
    Initial Policy + Iter 4 Constitution  & 26.7 & \\
    STaPLe & 33.4 \\
    \midrule
    \midrule
    \textbf{Granite-3.1-8B-Instruct}&  \\
    \midrule
    Initial Policy & 30.2 \\
    Self-Refine & 31.7 \\
   Initial Policy + Iter 4 Constitution   & 33.1 \\
   STaPLe & 38.4 \\
    \midrule
    \midrule
    \textbf{Qwen2.5-7B-Instruct} &  \\
    \midrule
    Initial Policy & 30.4 \\
    Self-Refine & 30.7 \\
    Initial Policy + Iter 4 Constitution  &  33.3   \\
    STaPLe & 40.2 \\
    \bottomrule
  \end{tabular}
\end{table}

\section{Self-Improvement on Reasoning Models}\label{appendix:qwen3-32b}

While our procedure was designed primarily with boosting the instruction-following capabilities of weaker, small language models (SLMs), STaPLe can also be applied to stronger models, including those trained for advanced reasoning capabilities. We study the performance of Qwen3-32B with the "thinking" mode enabled, and compare STaPLe and STaR over three iterations, averaged over three experiments. Note that self-refinement is performed only over the final response, not over the thinking tokens (the <think>...</think> block). As exhibited by Table \ref{table:qwen3-32b-results}, STaPLe maintains a consistent gain over the iterations (+8.3\%), and its margin of improvement compared to STaR is also relatively consistent, an encouraging result. 

\begin{table}[h]
\footnotescript
  \caption{Evaluating the performance of STaPLe and STaR with the Qwen3-32B reasoning model over 10k prompts per iteration. }
\label{table:qwen3-32b-results}
  \centering
  \begin{tabular}{ccccccl}
    \toprule
    Model    & AlpacaEval  \\
    \midrule
    \textbf{Qwen3-32B w/ Thinking Mode} &   \\
    \midrule
    Initial Policy & 60.5 \\
    \midrule
    STaR Iter 1 (8.2k) & 63.0 \\
    STaPLe Iter 1 (8.2k) & 65.5 \\
    \midrule
    STaR Iter 2 (8.3k) & 64.7 \\
    STaPLe Iter 2 (8.5k) & 67.2 \\
    \midrule 
    STaR Iter 3 (8.5k) & 66.3  \\
    STaPLe Iter 3 (8.6k) & 68.6 \\
    \bottomrule
    \end{tabular}
\end{table}

\section{STaPLe Algorithm Prompts}\label{appendix:prompts}

\subsection{Principle Mining Prompt}

\begin{promptbox}[colframe=NavyBlue,boxrule=1pt]

Prompt: \{prompt\} \\ \\ 
Here is the previous response: \{initial\_response\} \\ \\ 
Here is the target response: \{gold\_response\} \\ \\ 
Identify a high-level principle that may be useful to improve the quality of this response to a human reader, to become more similar to the target response. If there is a principle that you can propose, provide it in the format of 'New Principle: {*[new principle name]*}'. Otherwise, if there is no new principle that you can propose, respond with {*[None]*} at the end of your response.

\end{promptbox}

\subsection{Critique Generation Prompt}

\begin{promptbox}[colframe=NavyBlue,boxrule=1pt]

Prompt: \{prompt\} \\
Response: \{curr\_response\} \\ \\
Provide feedback on the above response, focusing entirely on how much it addresses \{principle\}. Be critical of the response, and how it can improve relative to addressing \{principle\} \\\\ 
Feedback: 

\end{promptbox}

\subsection{Principle-Conditioned Refinement Prompt}

\begin{promptbox}[colframe=NavyBlue,boxrule=1pt]

Prompt: \{prompt\} \\
Previous Response: \{curr\_response\} \\
Feedback: \{feedback\} \\ \\
Given this feedback on how the previous response addresses \{principle\}, improve the response on addressing \{principle\}" \\
Improved Response:
\end{promptbox}

\newpage
\section{Ablations}
\subsection{Label Replacement Method}\label{sec:clustering-methods}

It is valuable to study \textit{representativeness} during clustering, to ensure that the compressed set of principles reasonably retains the information contained in the complete set. This led us to a thorough investigation into the performance of the STaPLe algorithm under different label replacement methods. In particular, in addition to the medoid method outline in Section \ref{sec:pr-clustering}, we explore using the mode of each clustering based on the counts of principles invoked and an augmentation on the medoid scheme, where we only perform the label replacement if the difference in perplexity of the trajectory is bounded by a threshold $\tau_{PPL}$, which we take as 0.2. 

\begin{equation}
    \tilde{Z}_{medoid} = \{m_k: m_k = \arg\min\limits_{m \in C_k} \sum_{j \in C_k} ||e_i - e_j||_2, \hspace{0.5mm} k \in [1,K]\}
    \tag{Medoid Representatives}
\end{equation}

\begin{equation}
  \tilde{Z}_{mode} = \bigl\{\,m_k : 
     m_k = \arg\max_{z\in C_k}\sum_{j\in C_k}\mathbf{1}(z_j = z)
     \,,\quad k=1,\dots,K\bigr\}
  \tag{Mode Representatives}
\end{equation}

For the cluster medoid and mode label-replacement methods, we simply retrieve the cluster $C_i$ which sample $i$ belongs to, and replace $\hat{z_i}$ with $\tilde{z}_i$ from $\tilde{Z}_{medoid}$ or $\tilde{Z}_{mode}$, respectively. For the third method, define the perplexity of the sequence $S$ from the iteration $t$ language model $M_t$ to be $PPL(S;\theta_t) = \exp(-\frac{1}{|S|}\sum\limits_{j=1}^{|S|} \ln[{P_{\theta_t}(S_j \mid S_{<j}})])$. We then compute the perplexity of the two sequence consisting of the input $x_i$, initial response $y_{i,1}$, principle candidate ($\hat{z}_i$ and $\tilde{z}_i$ from $\tilde{Z}_{medoid}$), critique based on the principle ($c_{\hat{z}_i}$ and $c_{\tilde{z}_i}$, respectively), and the refined response $y^2$ -- denote these sequences $S_{i,\hat{z}_i}$ and  $S_{i, \tilde{z}_i}$, respectively. If the difference in perplexity between these two sequences does not exceed a threshold $\tau$, we replace $\hat{z}_i$ with $\tilde{z}_i$ for $\widetilde{\mathcal{D}}$; else, we discard sample $i$. Intuitively, this means that all samples in $\widetilde{\mathcal{D}}$ with this perplexity difference scheme are those where the cluster medoid representative is nearly as good, if not better, than the original principle, based on likelihood of generation in the sequence, including the refined response. Formally, the set of principles retained are:
\[
\tilde{Z}_{PPL} \;=\;\bigl\{\,
\tilde z_i \in \tilde{Z}_{medoid}
\;\bigm|\;
\mathrm{PPL}(S_{i,\tilde{z_i}}; \theta_t)
- \mathrm{PPL}(S_{i,\hat{z_i}}; \theta_t)
\;\leq\;\tau_{PPL}
\bigr\}, i \in [1,|\mathcal{D}'|]\}
\]

Regardless of the scheme, this results in dataset $(x_i, y_{i,1}, \tilde{z}_i, y_{i,2}) \in \widetilde{D}$, where $|\widetilde{\mathcal{D}}| \leq |\mathcal{D}'|$ for the perplexity method (equality otherwise).

The results of this analysis are included in Table \ref{table:full-label-replacement-table}. We find that using the medoid outperforms using the mode or the perplexity scheme (denoted PPL) across nearly all experiments, with the exception of Granite-8B iteration 4 for MT-Bench (average) and Qwen-7B in iteration 4 for AlpacaEval. That being said, the values across the schemes are generally close to one another, and follow a similar trend to the unconstrained version of STaPLe, suggesting that they are all viable principle cluster labels that may be taught to the LM. As noted in Section \ref{sec:results}, STaPLe with clustering generally avoid the same degree of performance degradation seen in the unconstrained version for iteration 4 with Llama-8B and Granite-8B; this extends to the other two label replacement schemes as well. Revisiting the posterior regularization formulation as defined in Section \ref{sec:pr-clustering}, placing mass on a reduced number of elements induced by the clustering thus seems to, in fact, have a regularization effect of sorts. 

\begin{table}
\footnotescript
  \caption{Comparison of the label replacement schemes proposed in Section \ref{sec:pr-clustering}, against the unconstrained experiments in Table \ref{table:multiple-iters}, all with the STaPLe algorithm.}
  \label{table:full-label-replacement-table}
  \centering
  \begin{tabular}{ccccccl}
    \toprule
    Model    & MT-Bench (avg)  & MT-Bench (T1) & MT-Bench (T2) & AlpacaEval & IFEval WR \\
    \midrule
    \textbf{Llama-3.1-8B-Instruct} & & & & & \\
    \midrule
    Initial Policy & 7.46 & 8.09 & 6.83 & 26.9 & -- \\
    \midrule
    Unconstrained Iter 1 (28.2k) & 7.66 & 8.15  & 7.16 & 32.2 &  65.6\%   \\    
    Medoids Iter 1 (28.2k)  & 7.63 & 8.14 & 7.11 & 31.9 & 65.1\%  \\
    Modes Iter 1 (28.2k) & 7.59 & 8.10 & 7.09 & 31.2 & 64.5\% \\
    PPL  Iter 1 (28.2k) & 7.62 & 8.14 & 7.09 & 31.1 & 64.3\% \\
    \midrule
    Unconstrained Iter 2 (6.1k) & \textbf{7.74} & \textbf{8.19} & 7.29 & 34.4 & 66.2\% \\
    Medoids Iter 2 (6.0k)  & 7.70 & 8.15 & 7.25 & 34.6 & 66.0\%  \\
    Modes Iter 2 (6.0k) & 7.66 & 8.14 & 7.18 & 33.8 & 65.1\% \\
    PPL  Iter 2 (5.8kk) & 7.65 & 8.14 & 7.16 & 34.0 & 65.4\% \\
    \midrule
    Unconstrained Iter 3 (6.3k) & \textbf{7.74}& 8.16 & \textbf{7.31} & 35.6 & 68.8\%  \\
    Medoids Iter 3 (6.2k)  & 7.72 & 8.16& 7.28 & \textbf{35.7} & 68.4\%  \\
    Modes Iter 3 (6.2k) & 7.66 & 8.14 & 7.18 & 34.9 & 66.0\% \\
    PPL  Iter 3 (6.1k) & 7.68 & 8.13 & 7.23& 35.2 & 66.5\% \\
    \midrule
    Unconstrained Iter 4 (6.6k) & 7.71 & 8.13 & 7.30 & 33.4 &  68.9\% \\
    Medoids Iter 4 (6.4k)  & 7.70 & 8.13 & 7.28 & 34.9 & \textbf{69.1\%}  \\
    Modes Iter 4 (6.3k) & 7.63 & 8.13 & 7.14 & 34.1 & 66.7\% \\
    PPL  Iter 4 (6.1k) & 7.68 & 8.11 & 7.25 & 33.7 & 66.7\% \\
    \midrule
    \midrule
    \textbf{Granite-3.1-8B-Instruct} & & & & & \\
    \midrule
    Initial Policy & 7.83 & 8.59 & 7.08 & 30.2 & -- \\
    \midrule
    Unconstrained Iter 1 (24.1k) & 7.99 & 8.69  & 7.29 & 36.7 &   65.1\% \\
    Medoids Iter 1 (24.1k)  & 7.98& 8.66& 7.30 & 36.2 & 64.9\%  \\
    Modes Iter 1 (24.1k) & 7.94 & 8.69 & 7.19 & 35.8& 64.0\% \\
    PPL  Iter 1 (24.1k) & 7.93 & 8.64 & 7.23 & 35.2 & 63.3\% \\
    \midrule
    Unconstrained Iter 2 (5.2k) & 8.04 & 8.74 & 7.34 & 38.9 & 65.2\% \\
    Medoids Iter 2 (5.1k)  & 8.01 & 8.68 & 7.35 & 38.7 & 67.3\%  \\
    Modes Iter 2 (5.1k) & 7.98 & 8.71 & 7.25 & 37.8 & 65.6\% \\
    PPL  Iter 2 (4.8k) & 7.99 & 8.65 & 7.33 & 38.1 & 66.7\% \\
    \midrule
    Unconstrained Iter 3 (5.9k) & \textbf{8.06} & \textbf{8.75} & 7.38 & \textbf{39.8} & \textbf{71.6\%} \\
    Medoids Iter 3 (5.4k)  & \textbf{8.06} & 8.74 & 7.39 & 39.4 & 69.9\%   \\
    Modes Iter 3 (5.3k) & 8.02& 8.74 & 7.30 & 38.9 & 68.0\%  \\
    PPL  Iter 3 (5.2k) & 8.05 & 8.73 & 7.38 & 39.1 & 68.6\% \\
    \midrule
     Unconstrained Iter 4 (6.3k) & 8.04 & 8.66 & \textbf{7.41} & 38.4 & 67.6\%  \\
    Medoids Iter 4 (5.8k)  & 8.03 & 8.65 & \textbf{7.41} & 38.8 & 68.4\%  \\
    Modes Iter 4 (5.5k) & 8.01 & 8.68 & 7.35 & 37.3 & 67.1\% \\
    PPL  Iter 4 (5.3k) & 8.04 & 8.65 & 7.43 & 38.2 & 67.7\% \\
    \midrule
    \midrule
    \textbf{Qwen2.5-7B-Instruct} & & & &  & \\
    \midrule
    Initial Policy & 6.83 & 7.34 & 6.31 & 30.4 & -- \\
    \midrule
    Unconstrained Iter 1 (30.9k) & 7.03 & 7.48  & 6.59 & 37.3 & 68.2\%     \\
    Medoids Iter 1 (30.9k)  & 6.99 & 7.43 & 6.55 & 36.5 & 67.3\%  \\
    Modes Iter 1 (30.9k) & 6.97 & 7.43 & 6.51 & 36.3 & 67.3\% \\
    PPL  Iter 1 (30.9k) & 6.97 & 7.40 & 6.54 & 36.5 & 66.9\% \\
    \midrule
    Unconstrained Iter 2 (6.5k) & 7.14 & 7.55 & 6.73 & 39.4 & 66.2\% \\
    Medoids Iter 2 (6.5k)  & 7.10 & 7.46 & 6.74 & 38.9 & 68.4\%  \\
    Modes Iter 2 (6.5k) & 7.08 & 7.48 & 6.68 & 38.5& 67.3\% \\
    PPL  Iter 2 (6.3k) & 7.09 & 7.46 & 6.73 & 38.5& 67.7\% \\
    \midrule
    Unconstrained Iter 3 (7.0k) & 7.20 & 7.63 & 6.78 & 39.8 & 72.5\%  \\
   Medoids Iter 3 (6.9k)  & 7.17 & 7.54 & 6.80 & 39.8 & 70.4\%  \\
    Modes Iter 3 (6.9k) & 7.12 & 7.54 & 6.70 & 39.2 & 68.8\% \\
    PPL  Iter 3 (6.8k) & 7.15 & 7.53 & 6.78 & 39.6 & 69.7\% \\
    \midrule
    Unconstrained Iter 4 (7.1k) & \textbf{7.24} & \textbf{7.64} & \textbf{6.85} & \textbf{40.2} & \textbf{73.4\%} \\
    Medoids Iter 4 (7.2k)  & 7.22 & 7.60 & 6.84 & 39.9 & 72.1\%  \\
    Modes Iter 4 (7.1k) & 7.14 & 7.56 & 6.73 & 39.1 & 69.7\% \\
    PPL  Iter 4 (7.1k) & 7.17 & 7.55 & 6.79 & 40.0 & 71.0\% \\
    \bottomrule
  \end{tabular}
\end{table}

\newpage
\subsection{LLM-as-a-Judge Rejection Sampling}\label{appendix:lm-as-a-judge-sim}

We note in Section \ref{sec:algorithm} and \ref{sec:expt-setup} that we use the Rouge-L F1 score as the similarity scoring metric between a candidate response and the gold reference. We find this method to work well in practice, as shown by the results thus far. Nonetheless, under the recent paradigm of using an LLM-as-a-judge \citep{mt-bench}, one could use a stronger performing model as a judge to score closeness to the gold, provided that one is willing to expend the inference-time compute to do so. We explore this setup using the Phi-4 model \citep{phi4}, a 14B parameter model which reduces latency in performing $N+1$ judge queries (one per refined response, along with the initial response), compared to a larger model such as Mixtral-8x22B or Llama-3.1-405B-Instruct. We use a score threshold of 9 on a scale from 1-10 for the initial response -- that is, if the model assigns a score of 8 or lower, we proceed to refinement. 

\subsubsection{Judge Prompt for Similarity Scoring}

We leverage a judge prompt adapted from \cite{katsis2025mtragmultiturnconversationalbenchmark}, focusing on comparison against the reference answer rather than faithfulness to a grounding document. 

\begin{promptbox}[colframe=NavyBlue,boxrule=1pt]
   \#\# System \newline
    [Instruction]
    
    Please act as an impartial judge and evaluate the quality of the response provided by an AI assistant to the user question given the provided document and a reference answer."
    \\ \\
    \#\# User \newline
    Your evaluation should assess the faithfulness, appropriateness, and completeness. Your evaluation should focus on the assistant’s answer to the question of the current turn. You will be given the assistant’s answer and a sample reference answer. You will also be given the user questions and assistant’s answers of the previous turns of the conversation. You should consider how well the assistant’s answer captures the key information, knowledge points mentioned in the reference answer, when appropriate, and how it respects or builds upon the focus and knowledge points from the previous turns. \\ \\
    {[Appropriateness]}: You should evaluate if the assistant’s answer is relevant to the question of the current turn and if it addresses all the issues raised by the question without adding extra information. \\ \\ 
    {[Completeness]}: You should evaluate whether the assistant’s answer is complete with information from the reference. Begin your evaluation by comparing the assistant’s answer against the reference answer in this turn. Be as objective as possible, and provide a detailed justification for your rating. You must rate the response on a scale of 1 to 10 and providing a justification. Return your response in the following format:
    \{"score": your\_score, "justification": your\_justification\} \\ \\
    {[INPUT]} \\
    \{prompt\} \\ \\
    {[REFERENCE]} \\
    \{gold\} \\ \\
    {[PREDICTION]} \\
    \{response\} \\
    
\end{promptbox}

\subsubsection{Results}

\begin{table}[h]
\footnotescript
  \caption{Self-improvement with the Constrained STaPLe algorithm using a Phi-4 model as a judge to score similarity to the gold response for rejection sampling. We include Constrained STaPLe with Rouge-L, to make a direct comparison, denoted "STaPLe w/ Rouge". }
  \label{table:phi4-judge-results}
  \centering
  \begin{tabular}{ccccccl}
    \toprule
    Model    & MT-Bench (avg)  & MT-Bench (T1) & MT-Bench (T2) & AlpacaEval & IFEval WR \\
    \midrule
    \textbf{Llama-3.1-8B-Instruct} & & & & & \\
    \midrule
    Initial Policy & 7.46 & 8.09 & 6.83 & 26.9 & -- \\
    \midrule
    STaR Iter 1 (28.2k)  & 7.43  & 8.04  & 6.81 & 29.1 & 55.5\%   \\
    STaPLe w/ Rouge Iter 1 (28.2k)  & 7.63 & 8.14 & 7.11 & 31.9 & 65.1\%  \\
    STaPLe w/ Judge Iter 1 (25.8k) & 7.60 & 8.13 & 8.08 & 31.6 & 64.9\% \\
    \midrule
    STaR Iter 2 (6.0k)  & 7.47 & 8.08 &  6.86 & 30.6 & 57.7\% \\
    STaPLe w/ Rouge Iter 2 (6.0k)  & 7.70 & 8.15 & 7.25 & 34.6 & 66.0\%  \\
    STaPLe w/ Judge Iter 2 (5.7k) & 7.68 & 8.15 & 7.21 & 34.1 & 65.6\% \\
    \midrule
    STaR Iter 3 (6.1k)  & 7.51 & 8.10 & 6.91 & 31.5 & 61.0\% \\
    STaPLe w/ Rouge Iter 3 (6.2k)  & \textbf{7.72} & \textbf{8.16}& \textbf{7.28} & \textbf{35.7} & \textbf{68.4\%}  \\
    STaPLe w/ Judge Iter 3 (6.3k) & 7.70 & \textbf{8.16} & 7.25 & 35.6 & 68.0\%  \\
    \midrule
    \midrule
    \textbf{Granite-3.1-8B-Instruct} & & & & & \\
    \midrule
    Initial Policy & 7.83 & 8.59 & 7.08 & 30.2 & -- \\
    \midrule
    STaR Iter 1 (24.1k) & 7.83  & 8.61  & 7.05 & 33.0 & 57.3\%  \\
    STaPLe w/ Rouge Iter 1 (24.1k)  & 7.98& 8.66& 7.30 & 36.2 & 64.9\%  \\
    STaPLe w/ Judge Iter 1 (20.9k) & 7.93 & 8.66 & 7.20 & 36.0 & 65.2\% \\
    \midrule
    STaR Iter 2 (5.4k)  & 7.86 & 8.63 & 7.10 & 34.7 & 59.5\% \\
    STaPLe w/ Rouge Iter 2 (5.1k)  & 8.01 & 8.68 & 7.35 & 38.7 & 67.3\%  \\
    STaPLe w/ Judge Iter 2 (5.2k) & 8.01 & 8.70 & 7.31 & 39.0 & 66.9\% \\
    \midrule
    STaR Iter 3 (5.9k)  & 7.92 & 8.66 & 7.18 & 35.4 & 61.9\% \\
    STaPLe w/ Rouge Iter 3 (5.4k)  & 8.06 & 8.74 & \textbf{7.39} & 39.4 & 69.9\%   \\
    STaPLe w/ Judge Iter 3 (6.3k) & \textbf{8.07} & \textbf{8.76} & 7.38 & \textbf{40.4} & \textbf{70.2\%} \\
    \midrule
    \midrule
    \textbf{Qwen2.5-7B-Instruct} & & & & & \\
    \midrule
    Initial Policy & 6.83 & 7.34 & 6.31 & 30.4 & -- \\
    \midrule
    STaR Iter 1 (30.9k) & 6.85  & 7.39  & 6.31 & 34.5 &  61.0\%  \\
    STaPLe w/ Rouge Iter 1 (30.9k)  & 6.99 & 7.43 & 6.55 & 36.5 & 67.3\%  \\
    STaPLe w/ Judge Iter 1 (29.5k) & 6.96 & 7.45 & 6.48& 36.2 &  66.7\%  \\
    \midrule
    STaR Iter 2 (6.5k)  & 6.98 & 7.45& 6.51 & 36.9 & 63.0\% \\
    STaPLe w/ Rouge Iter 2 (6.5k)  & 7.10 & 7.46 & 6.74 & 38.9 & 68.4\%  \\
    STaPLe w/ Judge Iter 2 (6.5k) & 7.05 & 7.48 & 6.63 & 38.1 & 67.7\% \\
    \midrule
    STaR Iter 3 (7.1k)  & 7.08 & 7.58 & 6.59 & 37.6 & 66.4\% \\
    STaPLe w/ Rouge Iter 3 (6.9k)  & \textbf{7.17} & 7.54 & \textbf{6.80} & \textbf{39.8} & \textbf{70.4\%}  \\
    STaPLe w/ Judge Iter 3 (7.2k) & 7.13 & \textbf{7.56} & 6.70 & 39.5 & 69.5\% \\
    \bottomrule
  \end{tabular}
\end{table}

In Table \ref{table:phi4-judge-results}, we present a similar table as Table \ref{table:multiple-iters}, but comparing STaPLe over 3 iterations with the judge for rejection sampling in place of Rouge-L scoring. We use constrained STaPLe with the medoids label replacement method. We observe that the MT-Bench average scores drop slightly relative to using the Rouge-L similarity function, but still vastly outperforming STaR; in fact for Granite-8B, the iteration 2 scores are equal and STaPLe with the Phi-4 judge actually outperforms it in iteration 3. Notably, the turn-1 scores are higher with the Phi-4 while the turn-2 scores drop. On AlpacaEval, the scores of STaPLe with the judge are slightly lower than with Rouge-L for Llama-8B and Qwen-7B, while they gain +1\% in iteration 3 for Granite-8B. A similar trend persists for the IFEval Win-rates, where Granite gains slightly in iterations 1 and 3, while Qwen and Llama drop slightly. We conclude that given the scores are largely similar, this highlights the generality of the STaPLe algorithm in expanding to various choices of similarity function. 

\newpage
\subsection{Bayesian Hyperparameter Optimization for Clustering Distance Threshold}\label{appendix:bayesian-hypers}

As discussed in Section \ref{sec:expt-setup}, we use the deterministic agglomerative clustering algorithm to ensure a fast, yet consistent assignment of clusters over the principle embeddings. However, this relies on a hyperparameter, $\delta$, which we use to denote the Euclidean distance threshold under which clusters will be merged. As such, a lower threshold corresponds to a greater number of clusters, and vice versa, thus controlling the size of the yielded constitutions. This hyperparameter is currently set in a manual fashion, where the size and representative elements (medoids or modes) are inspected by the authors of this work and the threshold adjusted if needed -- this resulted in thresholds of 6 (Granite) and 8 (Llama and Qwen) for iteration 1, which was subsequently decreased to 5 and 7, respectively, for iterations 2-4. However, it is desirable for this threshold to be adaptive, and to mathematically encode the target properties for a cluster to satisfy. 

Accordingly, we design an objective function consisting of two terms over a clustering assignment: 1. the inter-medoid diversity and 2. the intra-cluster tightness. The former is denoted by the average cosine-similarity (abbreviated as "cossim" henceforth) between each pair of medoids, while the latter is average cosine similarity of the points in their cluster to their own medoid. This can be written mathematically as follows:
\[J(\delta) = \lambda \cdot \frac{2}{|C|(|C|-1)} \sum\limits_{1 \leq i < j \leq |C|} [1-cossim(m_i,m_j)] + (1-\lambda) \cdot \frac{1}{|C|}\sum\limits_{k=1}^{|C|}\frac{1}{|C_k|}\sum\limits_{i \in C_k} cossim(z_i,m_k)\]

where $C = AggClustering(\delta)$ is the set of clusters assigned by the Agglomerative Clustering algorithm at a threshold of $\delta$. Given we value a balance between medoid diversity and intra-cluster tightness to the medoid for higher quality assignments, we set $\lambda = 0.5$ to weigh both terms equally. 

We aim to search for a value of $\delta$ that yields a clustering $C$ that maximizes this objective. We use the scikit-optimize package \citep{head2020scikitoptimize} to perform Bayesian optimization via Gaussian Processes to search for an optimal value of $\delta$ over this function. This process performs Gaussian Process regression over seen instances, uses an expected improvement ($-\mathop{\mathbb{E}}[J(x)-J(x^+)]$ acquisition function to identify the next threshold to evaluate, then clusters and evaluates at the next chosen value of $x$, repeating this process iteratively. We use the L-BFGS algorithm over 30 evaluations.

We evaluate the STaPLe algorithm with the Llama-8B and Granite-8B models at the chosen thresholds $\delta_i^*$ for iteration $i$, which we also report below. Following from the results in Table \ref{table:full-label-replacement-table}, where the medoid label replacement scheme performs the best, we apply this to the clusters yielded using $\delta_i^*$.

\begin{table}[h]
\footnotescript
  \caption{Analyzing the Constrained STaPLe algorithm performance over three iterations with optimal thresholds on the diversity-tightness objective searched over via Bayesian hyperparameter optimization. We use the medoids label replacement scheme, among the options in Appendix \ref{sec:clustering-methods}.}
  \label{table:phi4-judge-results}
  \centering
  \begin{tabular}{ccccccl}
    \toprule
    Model    & MT-Bench (avg)  & MT-Bench (T1) & MT-Bench (T2) & AlpacaEval \\
    \midrule
    \textbf{Llama-3.1-8B-Instruct} & & & & & \\
    \midrule
    Initial Policy & 7.46 & 8.09 & 6.83 & 26.9  \\
    \midrule
    STaR Iter 1 (28.2k)  & 7.43  & 8.04  & 6.81 & 29.1    \\
    STaPLe ($\delta_1=8.0$) Iter 1 (28.2k)  & 7.63 & 8.14 & 7.11 & 31.9   \\
    STaPLe ($\delta_1^*=7.2$) Iter 1 (28.2k) & 7.64 & 8.14 & 7.14 & 31.8 \\
    \midrule
    STaR Iter 2 (6.0k)  & 7.47 & 8.08 &  6.86 & 30.6  \\
    STaPLe ($\delta_2=7.0$) Iter 2 (6.0k)  & 7.70 & 8.15 & 7.25 & 34.6   \\
    STaPLe  ($\delta_2^*=7.3$) Iter 2 (6.1k) & 7.72 & 8.16 & 7.28 & 34.6 \\
    \midrule
    STaR Iter 3 (6.1k)  & 7.51 & 8.10 & 6.91 & 31.5  \\
    STaPLe ($\delta_3=7.0$) Iter 3 (6.2k)  & 7.72 & 8.16& 7.28 & \textbf{35.7}   \\
    STaPLe  ($\delta_3^*=6.6$) Iter 3 (6.4k) & \textbf{7.75} & \textbf{8.18} & \textbf{7.33} & 35.4  \\
    \midrule
    \midrule
    \textbf{Granite-3.1-8B-Instruct} & & & & & \\
    \midrule
    Initial Policy & 7.83 & 8.59 & 7.08 & 30.2  \\
    \midrule
    STaR Iter 1 (24.1k) & 7.83  & 8.61  & 7.05 & 33.0   \\
    STaPLe ($\delta_1=6.0$) Iter 1 (24.1k)  & 7.98& 8.66& 7.30 & 36.2   \\
    STaPLe ($\delta_1^*=6.3$) Iter 1 (24.1k) & 7.98 & 8.65 & 7.31 & 36.0 \\
    \midrule
    STaR Iter 2 (5.4k)  & 7.86 & 8.63 & 7.10 & 34.7  \\
    STaPLe ($\delta_2=5.0$) Iter 2 (5.1k)  & 8.01 & 8.68 & 7.35 & 38.7   \\
    STaPLe ($\delta_2^*=5.9$) Iter 2 (5.2k) & 8.02 & 8.68 & 7.36 & 38.8 \\
    \midrule
    STaR Iter 3 (5.9k)  & 7.92 & 8.66 & 7.18 & 35.4  \\
    STaPLe ($\delta_3=5.0$) Iter 3 (5.4k)  & 8.06 & 8.74 & 7.39 & \textbf{39.4}    \\
    STaPLe ($\delta_3^*=4.2$) Iter 3 (5.9k) & \textbf{8.08} & \textbf{8.75} & \textbf{7.41} & 39.2 \\
    \bottomrule
  \end{tabular}
\end{table}

Notably, it is interesting that the thresholds drop in iteration 3, suggesting that as the number of clusters decreases, a more permissive threshold suffices to balance diversity and cluster tightness. We find that the optimized thresholds result in very similar results, albeit with slight improvements across both MT-Bench turns, and thus the average score as well. 

\subsubsection{Diversity of Constitutions with Manually Selected Thresholds}\label{appendix:diversity-of-constitutions}

To further study the claim of the original, hand-set thresholds being fairly well optimized, we can use this 
 this objective function $J(\delta)$ as an appropriate metric to study the quality of the clusterings yielded. As expected, the optimized thresholds improve diversity, by a sufficient margin to suggest that there exist multiple thresholds which would improve upon the manually set $\delta_i$ values; the best of which being these $\delta_i^*$ values. 

\begin{table}[h]
\footnotescript
  \caption{Analyzing the constitutions yielded by the Constrained STaPLe algorithm, both with and without the Bayesian hyperparameter optimization process detailed above. }
  \label{table:diversity-of-constitutions}
  \centering
  \begin{tabular}{ccccccl}
    \toprule
    Model    & $J(\delta)$ \\
    \midrule
    \textbf{Llama-3.1-8B-Instruct} & & & & & \\
    \midrule
    STaPLe ($\delta_1=8.0$) Iter 1 (28.2k)  &  0.6437  \\
    STaPLe ($\delta_1^*=7.6$) Iter 1 (28.2k) & 0.6502 \\
    \midrule
    STaPLe ($\delta_2=7.0$) Iter 2 (6.0k)  &  0.6625  \\
    STaPLe  ($\delta_2^*=7.3$) Iter 2 (6.1k) & 0.6732 \\
    \midrule
    STaPLe ($\delta_3=7.0$) Iter 3 (6.2k)  & 0.6889   \\
    STaPLe  ($\delta_3^*=6.6$) Iter 3 (6.4k) & 0.7054 \\
    \midrule
    \midrule
    \textbf{Granite-3.1-8B-Instruct} & & & & & \\
    \midrule
    STaPLe ($\delta_1=6.0$) Iter 1 (24.1k)  & 0.6036  \\
    STaPLe ($\delta_1^*=6.3$) Iter 1 (24.1k) & 0.6151 \\
    \midrule
    STaPLe ($\delta_2=5.0$) Iter 2 (5.1k)  & 0.6241  \\
    STaPLe ($\delta_2^*=5.9$) Iter 2 (5.2k) & 0.6482 \\
    \midrule
    STaPLe ($\delta_3=5.0$) Iter 3 (5.4k)  & 0.6765     \\
    STaPLe ($\delta_3^*=4.2$) Iter 3 (5.9k) & 0.6894 \\
    \bottomrule
  \end{tabular}
\end{table}

\section{Qualitative Examples of Principle-Guided Self-Correction}\label{appendix:qualitative-examples}

\subsection{Llama-3.1-8B-Instruct IFEval Examples}

Note that the "Initial Response:" tags in the examples below are added for illustrative purposes, and are \textit{not} generated at inference-time; the other tags "Principle:" and "Refined Response:" \textit{are} generated to clearly indicate self-correction to the user.

\subsubsection{Principle: Conciseness and Clarity}
\begin{examplebox}

Prompt: Write a short blog post about a trip to Japan using less than 300 words.\\

Initial Response: Japan is a country that never fails to amaze. From the bustling streets of Tokyo to the serene temples of Kyoto, every corner of this island nation offers a unique experience. \\

Our journey began in Tokyo, a city that seamlessly blends the traditional and the modern. We marveled at the neon-lit skyscrapers of Shibuya, watched the world's busiest pedestrian crossing, and indulged in sushi at Tsukiji Fish Market. The city's vibrant energy was infectious, and we found ourselves captivated by its charm. \\ 

Next, we headed to Kyoto, a city that feels like stepping back in time. The ancient temples, like Kinkaku-ji and Fushimi Inari, were a testament to Japan's rich history. The tranquil gardens and the ritual of tea ceremony were a stark contrast to the city's hustle and bustle.\\

No trip to Japan would be complete without experiencing its natural beauty. We ventured to Hakone, a mountainous region known for its hot springs and stunning views of Mt. Fuji. The scenic train ride through the Japanese Alps was a highlight of our trip.\\

Japan's culinary scene is as diverse as its landscapes. From the humble bento box to the exquisite kaiseki, every meal was a gastronomic delight. The attention to detail and the emphasis on seasonal ingredients were evident in every dish.\\

Our trip to Japan was more than just a vacation; it was a cultural immersion. The country's blend of tradition and modernity, its respect for nature, and its culinary artistry left an indelible mark on us. Japan, with its myriad charms, is a place that will forever hold a special place in our hearts.\\

In less than 300 words, we've tried to encapsulate the essence of Japan - a country that is as complex as it is captivating. From its bustling cities to its serene temples, from its technological marvels to its natural wonders, Japan is a land of contrasts that offers something for every traveler. \\

\textbf{Principle: Conciseness and Clarity} \\

Refined Response: Japan, a land of contrasts, captivated us with its blend of tradition and modernity. Our journey began in Tokyo, where neon-lit skyscrapers and bustling streets of Shibuya contrasted with the serenity of Kinkaku-ji, or the Golden Pavilion. We savored sushi at Tsukiji Fish Market, a testament to Japan's culinary artistry.\\

Next, Kyoto's ancient temples and tranquil gardens offered a stark contrast to Tokyo's vibrant energy. The Fushimi Inari shrine, with its thousands of vermilion torii gates, was a spiritual highlight.\\

In Hakone, we experienced Japan's natural beauty. The scenic train ride through the Japanese Alps, coupled with the soothing hot springs, provided a serene escape.\\

Japan's culinary scene, from humble bento boxes to exquisite kaiseki, showcased the country's respect for seasonal ingredients and meticulous attention to detail.\\

Our trip to Japan was a cultural immersion, leaving an indelible mark. The country's contrasts - from bustling cities to serene temples, technological marvels to natural wonders - make it a must-visit destination for every traveler.

\end{examplebox} \vspace{1mm}
While the initial response is a reasonably well-written blog about Japan, it exceeds the 300 word limit, thus failing to follow the instruction. The trained STaPLe model recognizes this intrinsically, invokes the "Conciseness and Clarity" principle by observing that concision is the primary lacking element of the initial response, and then produces a refined response. The resulting response falls well under the 300 word limit, while paraphrasing much of the prior blog post's core elements, thus accomplishing the task. 

\subsubsection{Principle: Specificity and Completeness}

\begin{examplebox}

Prompt: I'm interested in a college with open enrollment and a regional accreditation. Which college would you recommend? Don't include the keywords "DuPage" and "Dade" in your response. Let's make it a constrained writing problem: be sure the letter p appears at least 15 times in your response. \\

Initial Response: I'd recommend considering the University of the People (UOP). \\

\textbf{Principle: Specificity and Completeness} \\

Refined Response: I'd recommend considering the University of the People (UOP), a non-profit institution with open enrollment and regional accreditation by the Higher Learning Commission (HLC). UOP offers affordable, accredited bachelor's and master's degrees in business administration, computer science, and health science. Their tuition-free model and flexible schedule make it an excellent choice for non-traditional students.\\

UOP's open enrollment policy means that anyone with a high school diploma or equivalent can apply, regardless of previous academic experience. This inclusivity aligns with your request for a college with open enrollment.\\

The university's regional accreditation by the HLC ensures that its degrees are recognized and respected by employers, academic institutions, and professional organizations. This accreditation is crucial for maintaining the value and transferability of your degree.\\

UOP's online, asynchronous learning format allows students to balance their education with work and personal commitments. This flexibility is particularly beneficial for adult learners and those in remote areas.\\

Moreover, UOP's commitment to affordability is commendable. As a non-profit, the university offers tuition-free degrees, making higher education accessible to a broader range of students.\\

In summary, the University of the People is a college with open enrollment and regional accreditation that meets your criteria. Its flexible, affordable, and accredited programs make it an excellent choice for students seeking a high-quality education without the burden of high tuition fees.
\end{examplebox} \vspace{1mm}

The initial response produces the name of a college, but does not provide a particularly informative response. It also does not meet the criterion of including the letter "p" at least 15 times in its response. The model intrinsically recognizes the need for greater specificity in its answer, and produces a more substantial and useful revised response.  

\subsubsection{Principle: Directness and Assertiveness}

\begin{examplebox}
Prompt: Write a cover letter for a job and end with exactly "Call me at 631-481-4867"
No other words should follow that phrase. \\

Initial Response: \\
Dear Hiring Manager, \\

I am writing to express my interest in the [Job Title] position at [Company Name], as advertised. With a Bachelor's degree in [Relevant Field] and over [Number of Years] years of experience in [Relevant Skills], I am confident that I would be a valuable addition to your team.\\

In my previous role at [Previous Company Name], I was responsible for [Key Responsibility 1], which allowed me to hone my skills in [Skill 1]. I also led a team of [Number of Team Members] in [Project/Initiative], resulting in [Achievement]. I am particularly drawn to [Company Name] because of its reputation for [Company Value/Mission], which aligns with my own professional goals.\\

I am excited about the opportunity to bring my unique blend of skills and experience to [Company Name] and am eager to contribute to your team's success. I am available at your earliest convenience for an interview and can be reached at 631-481-4867.\\

Thank you for considering my application. I look forward to the opportunity to discuss how I can contribute to your team.\\

Sincerely,\\
{[Your Name]}\\

Call me at 631-481-4867. \\

\textbf{Principle: Directness and Assertiveness} \\

Refined Response: Dear Hiring Manager,\\

I am excited to apply for the [Job Title] position at [Company Name]. With a Bachelor's in [Relevant Field] and [Number of Years] years of experience in [Relevant Skills], I am confident in my ability to excel in this role.\\

In my previous role at [Previous Company Name], I honed my skills in [Skill 1] through [Key Responsibility 1], and led a team of [Number of Team Members] to achieve [Achievement]. I am drawn to [Company Name] because of its commitment to [Company Value/Mission], which resonates with my professional aspirations.\\

I am eager to bring my unique blend of skills and experience to [Company Name] and contribute to your team's success. I am available for an interview at your earliest convenience and can be reached at 631-481-4867.\\

Thank you for considering my application. I look forward to discussing how I can contribute to your team.\\

Sincerely,
[Your Name]\\

Call me at 631-481-4867.
\end{examplebox} \vspace{1mm}

This minor rewrite reflects a more assertive tone, as indicated in the principle invoked by the model, "Directness and Assertiveness". Specifically, when comparing phrasing such as "discussing how I can contribute ..." as opposed to "the opportunity to discuss ...", the former is evidently more direct and confident in pursuing this job opportunity. 

\subsection{Granite-3.1-8B-Instruct IFEval Examples}

\subsubsection{Principle: "Urgency and Action Steps"}

\begin{examplebox}

Prompt: Please rewrite the following sentence in a serious tone, similar to the president of the united states, and without an upward bias or inflation alarmism. Please also include the keywords "ink" and "memoirs" in the sentence: "The main points to consider are the following: First, the ECB’s own forecasts, which currently have a strong upward bias, must be revised downwards. Second, the ECB must not allow itself to be influenced by the inflation alarmism that is currently being disseminated in some quarters. \\

Initial Response: In the context of our ongoing analysis, it is imperative to acknowledge the significance of the European Central Bank's projections, which have been marked by a pronounced upward trend. It is crucial that these forecasts are recalibrated to reflect a more balanced perspective. Furthermore, it is essential that the ECB remains steadfast in its decision-making process, unswayed by the inflation alarmism that has been circulating in certain circles. \\

\textbf{Principle: Urgency and Action Steps} \\

Refined Response: In the course of our meticulous examination, it is of paramount importance to recognize the profound implications of the European Central Bank's projections, which have been characterized by a marked upward trajectory. It is incumbent upon us to adjust these forecasts to better reflect a more nuanced and balanced outlook. Moreover, it is of critical significance that the ECB maintains its resolute stance in the face of the inflation alarmism that has been pervasive in certain sectors. This is a matter of utmost urgency, as we endeavor to navigate the intricate complexities of our economic terrain. In this regard, I have recently put ink to paper in my memoirs, which offer an in-depth exploration of our economic policies and the challenges we have encountered. These memoirs stand as a testament to our unwavering dedication to transparency and accountability, and I urge all to engage with them as we forge ahead in our economic journey.
    
\end{examplebox} \vspace{1mm}

While the initial response was good, using verbiage such as "imperative", "crucial", "essential" and "steadfast", it doesn't really make a call to action while avoiding an "alarmist" tone. It also does not reflect the keywords "ink" and "memoirs", as desired in the prompt. The model invokes the principle "urgency and action steps", which informs the call to action, along with a changed tone. Despite seeming to reflect a slight "upward bias", using keywords like "meticulous examination" and "maintains its resolute stance, " it balances this with phrases such as "critical significance", "incumbent upon us", which retain seriousness and a balanced outlook, as desired.

\subsubsection{Principle: Tone and Communication}

\begin{examplebox}
Prompt: Can you give me a zany, bullet point TLDR of this article: \nolinkurl{https://en.wikipedia.org/wiki/Dota\_2} \\

Make it zany, but do not include the keywords 'icefrog', 'blizzard', 'lawsuit' in the response.\\

Initial Response: \\- Dota 2 is a game where you play as a hero and battle it out with other players.\\- It's like a superhero showdown, but with magic and spells!\\- You can choose from over 100 heroes, each with their own unique abilities.\\- The game is played on a map called the "Aeon of Strife" which is like a giant battlefield.\\- You and your team have to destroy the enemy's base, called the "Ancient", to win.\\- There are also creeps, which are like minions that help you fight.\\- You earn gold by killing creeps and enemies, which you can use to buy items to make your hero even more powerful.\\- The game is super competitive, with professional teams and tournaments.\\- It's like the Olympics of video games, but with more magic and less spandex.\\- Dota 2 is free to play, but you can buy cosmetic items to customize your hero's look.\\- It's like dressing up your action figure, but with more strategy and less plastic.\\- Dota 2 has a huge community of players, and they love to stream their games on Twitch.\\- It's like watching a live-action superhero show, but with more keyboards and less popcorn.\\- Dota 2 is a game that will make you feel like a powerful wizard, without the risk of turning into a toad. \\

\textbf{Principle: Tone and Communication} \\

Refined Response: \\- Dota 2 is a wild game where you pick a hero and duke it out with other players!\\- It's like a superhero smackdown, but with magic and spells!\\- You can choose from over 100 heroes, each with their own special powers.\\- The game is played on a map called the "Aeon of Strife" which is like a giant playground.\\- You and your team gotta destroy the enemy's base, called the "Ancient", to win.\\- There are also creeps, which are like tiny sidekicks that help you fight.\\- You earn gold by defeating creeps and enemies, which you can use to buy cool stuff to make your hero even more awesome.\\- The game is super competitive, with professional teams and tournaments.\\- It's like the Super Bowl of video games, but with more magic and less helmets.\\- Dota 2 is free to play, but you can buy fancy stuff to make your hero look super cool.\\- It's like dressing up your action figure, but with more strategy and less plastic.\\- Dota 2 has a huge community of players, and they love to show off their skills on Twitch.\\- It's like watching a live-action superhero show, but with more keyboards and less popcorn.\\- Dota 2 is a game that will make you feel like a powerful wizard, without the risk of turning into a toad.\\- So grab your magic staff and get ready to battle it out in the Aeon of Strife!
\end{examplebox} \vspace{1mm}

The model associates a "zany" response with a more child-like tone, and to reflect the "Tone and Communication" principle, changes certain phrasing. For example "battle it out" $\rightarrow$ "... duke it out ..."; "have to destroy" $\rightarrow$ "gotta destroy"; "which are like minions" $\rightarrow$ "which are like tiny sidekicks", etc. It also adds this final sentence "So grab your magic staff and get ready to battle it out in the Aeon of Strife!", which can encourage a reader enthused by the "zany" tone to adopt the video game.

\section{Ethics Statement}\label{appendix:ethics}

Our findings suggest that most users can use STaPLe to improve the quality of the model's responses by eliciting and training the model to follow desirable latent attributes. As such, we hope that this induces a positive societal impact by way of producing a set of model-preferred labels which are used effectively to perform self-correction in an expressive, and thus interpretable manner. However, we caveat this by noting that a principle label \textit{alone} does not fully model the latent reasoning process that a human may use in self-correction, but rather, only serves as a stimulus to indicate the most relevant direction that a refined response should "step" towards for improvement.

An adversarial user could potentially use this process as a means to deliberately \textit{misalign} the model by using the principle discovery phase as a means to steer the model further \text{away} from desirable responses. That is, one could select another objective aside from the gold response to use as a self-correction target; this would likely yield drastically different principles and results. Training on such trajectories would induce \textit{self-degradation} behavior at inference-time, collapsing the quality of the model's responses, rather than the desired self-improvement of its self-correction abilities. We observe that this is a potential risk for all such principle-driven alignment strategies, even with human-curated or strong model-generated principles, but is especially the case with self-generated principles, given the generator is a relatively weaker language model. 

As a mitigation strategy for this potential negative impact, continuing from our discussion in Section \ref{sec:discussion-limitations}, we suggest human oversight by way of human-in-the-loop feedback. Specifically, an external set of reviewers can assess the quality and safety of the principles generated at the end of the E-step of each iteration after clustering before training the model to follow it. One could feasibly provide multiple candidate constitutions -- e.g. one constitution per label replacement strategy described in Appendix \ref{sec:clustering-methods}, or under different clustering thresholds (the impact of which is explored in Appendix \ref{appendix:bayesian-hypers}) -- and the annotators can select the best one and make edits to it as appropriate. For instance, if an annotator were to discard an element, one could simply discard all samples with labels that fall under that cluster. Thus, we acknowledge the role that clustering plays in making informed assessments over the constitution; as such, constrained STaPLe is more \textit{controllable} in comparison to the unconstrained version. While this reintroduces human oversight to balance performance with safety, it would  add minimal human labor overhead, as \textit{judging} a constitution for safety would require substantially fewer annotation hours than \textit{curating} one, presenting an advantage over methods such as Constitutional AI. We believe that this strategy would be effective in enforcing responsible usage of STaPLe. 

The above human-in-the-loop proposal is also an effective strategy to mitigate bias amplification over the iterations. Allowing annotators to discard elements that they assess would propagate biases or stereotypes would ensure that these behaviors are not learned by the model and then invoked in subsequent iterations, avoiding the cascading effect. Again, clustering and the label replacement scheme plays an important role here, by ensuring that we do not train on principles that are hyper-specific to a particular sample. This is especially relevant when there may be noisy or adversarial prompts designed to induce undesirable behavior. We suggest that users inspect the model-generated constitutions to assess their principles and the alignment of these labels with their values before training over these elements in the M-step.

Even when using STaPLe to improve responses towards the gold, it is possible that this reference answer is noisy -- i.e. it is incorrect (verifiable settings) or still undesirable in some aspect (preference settings). Given the algorithm's generality, dataset selection is left to the user -- we encourage users to analyze the gold responses to filter samples with lower quality gold responses accordingly during pre-processing. This could be done by way of human annotation (using Likert scale annotations on multiple attributes, akin to UltraFeedback), or using trained or model-based filters for undesirable qualities such as profane language. 

We believe that the promise of STaPLe in facilitating self-improvement in language models by alignment to model-generated constitutions outweighs the possible negative impacts. We further suggest that the strategies detailed above -- specifically, the introduction of some human oversight into the STaPLe algorithm -- would largely mitigate these risks and promote responsible usage.

\newpage
\section{Details of Models and Datasets Used}\label{appendix:models-and-datasets-info}

As noted in Section \ref{sec:expt-setup}, we use the following large language models in our experiments:
\begin{itemize}
    \item \href{https://huggingface.co/meta-llama/Llama-3.1-8B-Instruct}{Llama-3.1-8B-Instruct}  \citep{grattafiori2024llama3herdmodels}; this model is available under the custom Llama-3.1 Community License\footnote{\href{https://huggingface.co/meta-llama/Llama-3.1-70B-Instruct/blob/main/LICENSE}{https://huggingface.co/meta-llama/Llama-3.1-70B-Instruct/blob/main/LICENSE}} which includes provisions for commercial usage. 
    \item \href{https://huggingface.co/ibm-granite/granite-3.1-8b-instruct}{Granite-3.1-8B-Instruct}  \citep{granite}; this model is available under the permissive, Apache 2.0 open-source license.
    \item \href{https://huggingface.co/Qwen/Qwen2.5-7B-Instruct}{Qwen2.5-7B-Instruct}  \citep{qwen2025qwen25technicalreport}; this model is also available under Apache 2.0. 
\end{itemize}

Furthermore, in Appendix \ref{appendix:lm-as-a-judge-sim}, we explore the use of an LLM-as-a-judge as a similarity scoring function between a candidate response generated on-policy by one of the above models to the gold response. We instantiate this judge with the \href{https://huggingface.co/microsoft/phi-4}{Phi-4} language model \citep{phi4}, which is made available under the permissive MIT license. In Appendix \ref{appendix:qwen3-32b}, we examine the efficacy of our method on the \href{https://huggingface.co/Qwen/Qwen3-32B}{Qwen3-32B} reasoning model \citep{yang2025qwen3technicalreport}, which is available under the Apache 2.0 license. 

We also provide further details of the datasets used in the mining corpus, expanding on our description in Section \ref{sec:expt-setup}:
\begin{itemize}
    \item \href{https://huggingface.co/datasets/Anthropic/hh-rlhf}{Anthropic HH-RLHF}: this dataset consists of a total of 161k preference pairs (chosen-rejected) over helpfulness and harmlessness as described in \cite{bai2022traininghelpfulharmlessassistant}. HH-RLHF is available under the MIT license. 

    \item \href{http://huggingface.co/datasets/openbmb/UltraFeedback}{UltraFeedback} \citep{cui2024ultrafeedbackboostinglanguagemodels}: this dataset consists of 64k prompts; for each prompt, responses are sampled from four different language models. For each response, Likert-scale annotations are obtained over four attributes -- helpfulness, honesty, instruction-following, and truthfulness -- with corresponding rationales. For the STaPLe algorithm, we only consider samples where all Likert scores are at least $3$, forming a list of gold responses. We then score against the gold in the  by taking the average over the multiple reference answers. UltraFeedback has been made available under the MIT license. 

    \item \href{https://huggingface.co/datasets/openai/summarize_from_feedback}{TL;DR} \citep{tldr}: this dataset consists of Reddit posts detailing a situation, along with two candidate summaries, in the "comparisons" part, which we use. They include a "choice" label, which we use to select our gold response (summary). We use the train set, consisting of 92.9k samples. TL;DR is available under the CC-BY-4.0 license.

    \item \href{https://huggingface.co/datasets/hotpotqa/hotpot_qa}{HotpotQA}: this dataset focuses on Wikipedia-based question answering. We use the train set of the "fullwiki" split, consisting of 90.4k samples; these contain a question, context, supporting facts, and a gold response. HotpotQA is available under CC-BY-SA-4.0.
\end{itemize}

Lastly, we discuss the details behind the evaluation datasets and evaluation framework. 

\begin{itemize}
    \item \href{https://github.com/lm-sys/FastChat/blob/main/fastchat/llm_judge/README.md}{MT-Bench} consists of 80 prompts, testing multi-turn, open-ended response generation capabilities for chat assistants. It is available under the Apache 2.0 license, in the FastChat GitHub repository. We use GPT-4o \citep{openai2024gpt4ocard} as the judge model. 

    \item \href{https://github.com/tatsu-lab/alpaca_eval}{AlpacaEval-2.0-LC} \citep{alpaca_eval} consists of 805 samples testing instruction-following abilities, using length-controlled win-rates through a generalized linear modeling approach \citep{dubois2024length}. It is released under the Apache 2.0 license.

    \item \href{https://huggingface.co/datasets/google/IFEval}{IFEval} \citep{zhou2023instructionfollowingevaluationlargelanguage} consists of 541 prompts, similarly testing instruction-following abilities. It is released under the Apache 2.0 license. 
\end{itemize}

We used the \href{https://huggingface.co/prometheus-eval/prometheus-8x7b-v2.0}{Prometheus-8x7B-v2.0} language model \citep{prometheus} as a fine-grained judge to compare the quality of the STaPLe models' generations in their principle-following ability. This model is available under the Apache 2.0 license.

\end{document}